\documentclass[10pt]{article} % For LaTeX2e

\usepackage[preprint]{rlj} % Should be uncommented for preprint versions

\usepackage{amssymb}            % Defines common symbols like \mathbb R
\usepackage{mathtools}          % Extends amsmath, providing common math tools
\usepackage{mathrsfs}           % Enables \mathscr, which can work in cases that \mathcal does not
\usepackage{graphicx}           % For including images
\usepackage{subcaption}         % Allows for the use of subfigures and subcaptions
\usepackage[space]{grffile}     % For spaces in image names
\usepackage{url}                % For displaying URLs
\usepackage{lipsum}             % For placeholder text

\usepackage{algorithm}
\usepackage{algorithmicx}
\usepackage{algpseudocode}
\usepackage{amsthm}
\newtheorem{theorem}{Theorem}
\newtheorem{lemma}{Lemma}
\newtheorem{assumption}{Assumption}
\usepackage{booktabs}
\usepackage{adjustbox}
\usepackage{placeins}

\newcommand{\edit}[1]{\textcolor{black}{#1}}
\newcommand{\hyp}{x}
\newcommand{\Hyp}{\mathcal{X}}
\newcommand{\noise}{\xi}

\title{Efficient Heteroscedastic Bayesian Optimization for Risk-Aware AutoRL}

\setrunningtitle{Efficient Heteroscedastic Bayesian Optimization for Risk-Aware AutoRL}

\author{Mingxuan Che\textsuperscript{1,$*$},
Tsung-Yuan Tseng\textsuperscript{3,4,$*$}, 
Theresa Eimer\textsuperscript{1,2}, \\
Marius Lindauer\textsuperscript{1,2},
Alexander von Rohr\textsuperscript{5}
}

\emails{\{m.che,t.eimer,m.lindauer\}@ai.uni-hannover.de, \\ tsung.yuan.tseng@cariad.technology, alexander.von.rohr@utn.de}

\affiliations{
$^{1}$\textbf{Leibniz University Hannover} \quad
$^{2}$\textbf{L3S Research Center} \quad 
$^{3}$\textbf{CARIAD SE}\\
$^{4}$\textbf{University of Wuppertal} \quad
$^{5}$\textbf{University of Technology Nuremberg}
\par % If including additional comments like below, use \par to add some whitespace. 
$^*$ Equal contribution; order decided by a fair coin flip
}

\contribution{
    We propose efficient and risk-averse heteroscedastic Bayesian optimization (ERAHBO), an adaptive re-sampling formulation of RAHBO \citep{makarova2021risk} that improves practical sample efficiency in HPO for RL. We provide sublinear regret guarantees along with an empirical comparison to common Bayesian optimization (BO) baselines.
    }
    {
    To integrate risk into the optimization process, we need to repeatedly evaluate each hyperparameter configuration. RAHBO sets a fixed number of evaluations for each configuration, which makes it computationally expensive on a runtime-intense domain like reinforcement learning.
    }

\contribution{
    We provide a dataset of RL learning outcomes mapping hyperparameters to post-training policy returns for 50 runs, enabling reproducible benchmarking of risk-aware hyperparameter optimization methods.
    }
    {
    Recent HPO performance datasets such as those provided in HPO-RL-Bench \citep{shala-automl24a} and ARLBench \citep{becktepe2024arlbench} include repeated evaluations for each configuration, but the number of seeds (up to ten) per hyperparameter configuration can be insufficient for a reliable estimation of heteroscedastic variance across the hyperparameter space. Our dataset addresses this by providing 50 seeds per configuration.
    }

\keywords{Bayesian optimization, reinforcement learning, AutoRL} % Your keywords

\summary{
Reinforcement learning (RL) learning outcomes can be highly stochastic, and both expected performance and variability often depend strongly on hyperparameter (HP) configurations. This makes HP optimization (HPO) brittle: outcome variability translates into a practical risk of unsuccessful training after HPO. 

We propose risk-aware HPO through a mean-variance objective, dubbed efficient and risk-averse heteroscedastic Bayesian optimization (ERAHBO). It models both mean and variance, and improves sample efficiency via adaptive re-sampling rather than a fixed replication budget per HP configuration.
We show that our adaptive strategy improves efficiency and HPO performance across a wide range of RL environments.
We additionally provide an offline dataset of RL learning outcomes mapping HP configurations to post-training policy returns across 50 repeated runs per HP configuration to evaluate risk-aware HPO methods.
}

\begin{document}

\makeCover  % Create the cover page
\maketitle  % Make the title section

\begin{abstract}
Reinforcement learning (RL) has shown remarkable success across a wide range of complex tasks. However, RL outcomes can be highly stochastic, and both expected performance and variability often depend on hyperparameter (HP) configurations. We propose efficient and risk-averse heteroscedastic Bayesian Optimization (ERAHBO), a Bayesian optimization method that models both the mean and variance of learning outcomes as functions of the HP configurations. ERAHBO aims to identify HP configurations that achieve high average return while reducing variability across training runs, and it improves the sample efficiency of the HP optimization via adaptive re-sampling rather than a fixed budget per HP. Empirical evaluations across diverse RL algorithms and environments demonstrate that ERAHBO generally outperforms both risk-neutral and risk-averse baselines, delivering improved sample efficiency for risk-averse returns.
\end{abstract}

%%%%%%%%%%%%%%%%%%%%%%%%%%%%%%%%%%%%%%%%%%%%%%%%%%%%%%%%%%%%%%%%
%% Section: Submission of papers to RLJ/RLC
%%%%%%%%%%%%%%%%%%%%%%%%%%%%%%%%%%%%%%%%%%%%%%%%%%%%%%%%%%%%%%%%
\section{Introduction}

The performance of reinforcement learning (RL) algorithms is often highly dependent on the choice of hyperparameters (HPs) \citep{berkenkamp2016safe, islam2017reproducibility,henderson-aaai18a,zhang-aistats21a,eimer-icml23a,obandoceron-rlj24}. Moreover, in most RL experiments, inherent randomness arises from e.g., stochastic task behavior, exploratory choices, or hardware nondeterminism~\citep{zhuang2022randomness}. 
Recent work shows that even very good HP configurations may still lead to a high degree of randomness in learning outcomes~\citep{dierkes-ewrl25}.

Automated RL (AutoRL) aims to make RL work reliably out-of-the-box by automating, for example, the setting of HPs.
However, the stochasticity of RL training makes automated hyperparameter optimization brittle: the configuration that appears best during HPO can fail to reproduce its performance in subsequent runs, and high-variance HP configurations can yield poor policies with nontrivial probability, requiring costly retraining. In practice, we would like HPO to return configurations that \emph{reliably} achieve strong performance across runs, supporting reproducible and cost-effective RL experimentation. Therefore, it is an important research question how we can efficiently extract HP configurations that consistently lead to good performance, i.e., high average return with low variability across repeated training runs.

Bayesian optimization (BO) is a sample-efficient, model-based approach to HPO that is well suited to expensive black-box objectives and is widely used as an automated alternative to manual tuning and search heuristics~\citep{bischl-dmkd23a}.
However, classical BO approaches typically optimize the mean of the objective function, in RL typically the expected final return.
This ignores training stability across runs for each hyperparameter configuration, i.e., its \emph{risk}.

A natural way to incorporate this notion into HPO is to use \emph{risk-aware} Bayesian optimization methods that explicitly model both the mean performance and the input-dependent variability of outcomes, and optimize a risk-sensitive objective.
In particular, risk-averse heteroscedastic Bayesian optimization (RAHBO; ~\citet{makarova2021risk}) provides a principled sequential BO approach that maintains separate surrogate models for the mean and variance of the return, and uses a risk-aware acquisition function.
RAHBO can therefore be used for risk-aware HP tuning.
However, in its original form, estimating the variance model relies on a \emph{static} number of evaluations (replications) per HP configuration, which can be inefficient in RL where each run is computationally costly.

We propose a novel adaptive replication strategy for RAHBO that determines when to acquire more samples of a configuration.
The key idea is to allocate runs where they matter most: configurations that are promising under a risk-aware objective but still ambiguous due to uncertainty, rather than spending a uniform replication budget everywhere (see Fig.~\ref{fig:erahbo}).
Our resulting method is an \textbf{efficient risk-averse heteroscedastic Bayesian optimization (ERAHBO)} algorithm, designed for the high degrees of variance and simultaneously high computational loads we see in RL.

To support the study of risk in AutoRL, we curate a new dataset of RL learning outcomes for different algorithms and environments which enables systematic analysis of heteroscedasticity in RL hyperparameters and benchmarking of risk-aware HPO methods.

\section{Problem formulation}

\begin{figure}[t]
  \begin{subfigure}[t]{1\linewidth}
    \centering
    \includegraphics{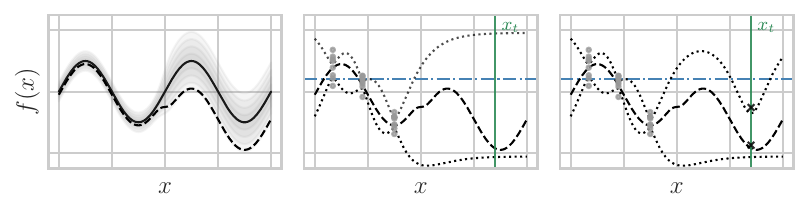}
    \label{sine}
  \end{subfigure}
  \caption{Adaptive re-sampling in ERHABO. \textbf{Left:} Example objective with two maxima at different noise levels, and the MV objective (dashed). \textbf{Middle:} Upper and lower confidence bounds for the MV objective (dotted) together with the incumbent LCB (blue dash-dot) after three configurations; the next query \(\hyp_t\) (chosen via UCB) is shown by the green vertical line. 
  \textbf{Right:} After two additional samples at \(\hyp_t\), the UCB falls below the incumbent LCB, so ERHABO does not need to re-sample.
  }
  \label{fig:erahbo}
\end{figure}

We consider a discounted Markov decision process (MDP). A (stochastic) policy $\pi(a\mid s)$ induces a distribution over trajectories $\tau \sim \pi(\tau)$. The performance of a fixed policy $\pi$ is its expected discounted return
$
J(\pi) := \mathbb{E}_{\edit{a} \sim \pi}\Big[\sum_{t=0}^{\infty}\gamma^t r(s_t,a_t)\Big]
$,
where $\gamma\in(0,1)$ is the discount factor and $r(s, a)$ is the reward function.

In HPO for RL, the decision variable is a hyperparameter configuration $\hyp \in \Hyp \subset \mathbb{R}^d$ of a learning algorithm $\mathcal{A}$, and the outcome is the trained policy produced by running $\mathcal{A}$ with $\hyp$. Because RL training is inherently stochastic (e.g., random initialization, exploration, and stochasticity encountered during training), running $\mathcal{A}$ with a fixed configuration $x$ induces a distribution over learned policies. We model the learning outcome as
$
\pi \sim \mathcal{A}(\pi; \hyp).
$
Crucially, the randomness we consider is the randomness of the \emph{learning outcome} $\pi$ (and thus of $J(\pi)$) induced by training.
We formulate risk-aware HPO as maximizing expected learning outcome while penalizing variability across learning outcomes:
\begin{align}
\label{eq:hpo}
\hyp^* \in \underset{\hyp \in \Hyp}{\mathrm{argmax}}\ 
\mathbb{E}_{\pi \sim \mathcal{A}(\pi; \hyp)} \big[J(\pi)\big]
- \alpha \mathbb{V}ar_{\pi \sim \mathcal{A}(\pi; \hyp)} \big[J(\pi)\big],
\end{align}
where $\alpha \ge 0$ defines the trade-off between mean and variance. For notational simplicity, define the mean performance
\begin{align}
\label{eq:f_def}
f(\hyp) := \mathbb{E}_{\pi \sim \mathcal{A}(\pi; \hyp)} \big[J(\pi)\big].
\end{align}

In Bayesian optimization (BO), we consider a sequential interaction problem with unknown objective $f:\Hyp\to\mathbb{R}$. In each BO round $t$, the algorithm chooses $\hyp_t \in \Hyp$, runs training once to obtain $\pi_t \sim \mathcal{A}(\pi; \hyp_t)$, and observes one realization of the learning outcome
\begin{align}
\label{eq:bo_observation}
y_t := J(\pi_t) = f(\hyp_t) + 
\noise(\hyp_t),
\end{align}
where $\noise(\hyp_t)$ is a zero-mean noise term capturing the stochasticity of the learning algorithm $\mathcal{A}$ when trained with hyperparameters $\hyp_t$ (i.e., variability across independent executions of the RL training with the same HPs $\hyp_t$), and is independent across rounds $t$. In the following, we drop the subscript $t$ when referring to a general $\hyp$ or when its meaning is clear from the context. Consequently,
\begin{align}
\mathbb{V}ar\!\left[\noise(\hyp)\right] = \mathbb{V}ar_{\pi \sim \mathcal{A}(\pi; \hyp)} \big[J(\pi)\big].
\end{align}

Following~\citet{makarova2021risk}, we assume that $f$ belongs to a reproducing kernel Hilbert space (RKHS) $\mathcal{H}_{\kappa}$ induced by a kernel $\kappa(\cdot,\cdot)$ satisfying $\kappa(\hyp,\hyp') \leq 1$ for all $\hyp,\hyp' \in \Hyp$, and that $\| f \|_{\kappa} \leq \mathcal{B}_f$ for some $\mathcal{B}_f>0$. We further assume $\noise(\hyp)$ is sub-Gaussian with a (possibly input-dependent) variance proxy $\rho^2(\hyp)$ uniformly bounded as $\rho(\hyp)\in[\underline{\rho},\overline{\rho}]$ for constants $\overline{\rho}\ge \underline{\rho}>0$.

Following \citet{sani2012risk}, \citet{makarova2021risk} and \citet{dai2023batch}, the mean-variance objective with variance proxy $\rho^2(\hyp)$ is
\begin{align}
\label{eq:mv_objective}
\mathrm{MV}(\hyp) = f(\hyp) - \alpha \rho^2(\hyp),
\end{align}
where a fixed $\alpha \geq 0$ expresses a user defined risk aversion.

We seek a sequence of evaluations $\{\hyp_t\}_{t=1}^{T}$ that attains high values of the risk-averse objective $\mathrm{MV}(\hyp)$ over $T$ BO rounds. Since each round $t$ collects $k_t$ independent samples (training runs) using $\hyp_t$, we evaluate performance via the sample-count cumulative regret
\begin{align}
\label{eq:cumulative_regret}
R_T := \sum_{t=1}^{T} k_t \Big(\mathrm{MV}(\hyp^*) - \mathrm{MV}(\hyp_t)\Big), \quad \text{where}  \quad  \hyp^* \in \arg\max_{\hyp \in \Hyp} \mathrm{MV}(\hyp).
\end{align}

\section{Related work}

This section gives a brief overview of BO for hyperparameter optimization (HPO), focusing on settings with stochastic and configuration-dependent outcomes, and then connects these ideas to HPO for RL, where training variability is often substantial.

Grid search and random search are simple baselines for HPO, but they become inefficient as the dimensionality of the HP space grows~\citep{bergstra-jmlr12a}. Bayesian optimization~\citep{garnett-book23a} improves sample efficiency for costly black-box objectives by fitting a surrogate model and selecting evaluations sequentially~\citep{bergstra-nips11a,snoek-nips12a-replace,springenberg-nips16a}. It is the backbone of many widely used HPO approaches~\citep{akiba-kdd19a,lindauer-jmlr22a,song2024vizier}.

The substantial run-to-run variability of RL training has been documented in prior work~\citep{henderson-aaai18a,obandoceron-rlj24} and motivates systematic and efficient approaches to HPO. HPO for RL is an emerging field (see \citet{parkerholder-jair22a} for an overview), but poses challenges due to high computational overhead and stochastic learning outcomes~\citep{agarwal2021deep,eimer-icml23a} arising from training instabilities~\citep{dierkes-ewrl25}. In this context, optimizing expected performance alone, often approximated via a small number of training runs~\citep{parkerholder-neurips20a,wan-automl22a}, may not capture reliability across runs. Our goal is to provide an objective and optimization strategy that better targets reliable HP configurations in RL.

Many classical BO methods assume homoscedastic noise and mismatched noise assumptions can degrade BO performance. Heteroscedastic Gaussian process models capture input-dependent noise explicitly~\citep{binois2018practical,binois2019replication}, but modeling variance alone does not prevent BO from selecting configurations with undesirable high variability in the learning outcomes.
To address this, risk-aware BO methods optimize objectives that explicitly trade off mean performance and variability. Prior work considers mean-variance objectives for heteroscedastic BO~\citep{makarova2021risk,dai2023batch}. In particular, RAHBO~\citep{makarova2021risk} uses a fixed evaluation budget per configuration to estimate variability, which is inefficient since it allocates significant resources to evaluate under-performing configurations. \citet{dai2023batch} addresses adaptive evaluation in a batch setting. In this paper, we evaluate RAHBO for HPO for RL problems and extend it with an adaptive evaluation strategy that speeds up the optimization process across many HPO tasks.
\section{Method}

This section describes our approach for risk-aware HPO under heteroscedastic training outcomes. We begin with the essential modeling and acquisition components of RAHBO~\citep{makarova2021risk}, and then present ERAHBO, an adaptive resampling strategy, together with a corresponding sublinear regret bound.

\subsection{RAHBO: risk-averse heteroscedastic Bayesian optimization}

We briefly summarize the components of risk-averse heteroscedastic Bayesian optimization (RAHBO)~\citep{makarova2021risk} that we build on.
The key idea of RAHBO is to model both the performance $f$ and variance proxy $\rho^2$ as separate Gaussian processes (GPs) to derive confidence bounds for a risk-averse variant of the Gaussian process upper confidence bound (GP-UCB) algorithm~\citep{srinivas-icml10a}.

RAHBO obtains mean and variance estimation to condition the GPs by repeatedly evaluating a configuration. In iteration $t$, it selects a configuration $\hyp_t$ and performs a fixed number $k$ of independent evaluations, yielding observations $\left\{ y_t^{i}\right\}_{i=1}^k$. The sample mean and variance are estimated as
\begin{equation}
    \hat{m}(\hyp_t) = \frac{1}{k} \sum_{i=1}^k y_t^{(i)}, 
    \quad 
    \hat{s}^2(\hyp_t) = \frac{1}{k-1} \sum_{i=1}^k( y_t^{(i)} - \hat{m}(\hyp_t))^2,
\end{equation}
which serve as noisy observations of $f(\hyp_t)$ and $\rho^2(\hyp_t)$ to condition the two GPs.

Let $\mathrm{UCB}_t^{f}(\hyp)$ denote an upper confidence bound for $f(\hyp)$, and $\mathrm{LCB}_t^{\mathrm{var}}(\hyp)$ a lower confidence bound for $\rho^2(\hyp)$, constructed from the two GP posteriors with appropriately chosen confidence parameters as in~\citet{makarova2021risk}. RAHBO selects the next query by maximizing an upper bound on $\mathrm{MV}(\hyp)$,
\begin{align}
\label{eq:rahbo_acq}
\hyp_{t} \in \arg\max_{\hyp\in\Hyp}\ \mathrm{UCB}_t^{f}(\hyp)\;-\;\alpha\,\mathrm{LCB}_t^{\mathrm{var}}(\hyp).
\end{align}

Under standard regularity assumptions, RAHBO achieves sublinear regret for the mean-variance objective with a fixed sample budget $k$ per iteration. For more details, see Appendix~\ref{apdx:rahbo}.

Our method retains the same modeling and risk-aware acquisition structure, but replaces the fixed budget $k$ with an adaptive re-sampling strategy that allocates evaluations based on estimated uncertainty, improving practical efficiency in high-cost settings such as RL.

\subsection{ERAHBO: efficient risk-averse heteroscedastic Bayesian optimization}

To resolve the limitation of RAHBO, we propose a confidence-based adaptive rule to automatically determine replications for \( k \in [k_{min}, k_{max}]\) in each BO iteration. 

\subsubsection{Confidence-based resampling}\label{adaptive rule}

While repeated sampling improves the estimation accuracy of both the mean and variance, it is not necessary to improve the estimate of the configuration once it can be certified as suboptimal. We construct the high-probability confidence bounds for the MV objective as
\begin{align}
\mathrm{UCB}_t^{\mathrm{MV}}(\hyp) &:= \mathrm{UCB}_t^f(\hyp) - \alpha\, \mathrm{LCB}_t^{\mathrm{var}}(\hyp), \quad
\mathrm{LCB}_t^{\mathrm{MV}}(\hyp) &:= \mathrm{LCB}_t^f(\hyp) - \alpha\, \mathrm{UCB}_t^{\mathrm{\mathrm{var}}}(\hyp).
\end{align}
The core idea of ERAHBO is to stop re-sampling a configuration once we know it is worse than the incumbent with high probability.

\paragraph{Incumbent lower bound}
We define the incumbent lower bound at iteration $t$ as
\begin{equation}\label{eq:inc_lcb}
B_t := \max_{\hyp \in \mathcal{D}_{t-1}} \mathrm{LCB}_t^{\mathrm{MV}}(\hyp),
\end{equation}
where $\mathcal{D}_{t-1}$ denotes the set of previously evaluated configurations. The quantity $B_t$ represents a high-confidence lower bound on the best MV value observed so far and serves as a conservative benchmark against which newly queried configurations are compared.

\paragraph{Stopping criterion}
At iteration $t$, the algorithm selects a configuration
\begin{equation}
\hyp_t \in \arg\max_{\hyp \in \Hyp} \mathrm{UCB}_t^{\mathrm{MV}}(\hyp),
\end{equation}
and begins collecting repeated evaluations at $\hyp_t$. Let $k$ denote the current number of samples collected at $\hyp_t$. As $k$ increases, the uncertainty of the mean-variance estimate decreases due to the effective observation noise scaling as $\rho^2(\hyp_t)/k$, leading to progressively tighter confidence bounds.

Sampling at $\hyp_t$ is terminated once the following condition is satisfied:
\begin{equation}
\label{eq:adaptive_rule}
\mathrm{UCB}_t^{\mathrm{MV}}(\hyp_t; k) \le B_t,
\end{equation}
or once a predefined maximum number of evaluations $k_{\max}$ is reached.
This stopping rule admits a clear interpretation. If even the optimistic estimate of the MV objective at $\hyp_t$ is no larger than the pessimistic estimate of the best competing configuration, then with high probability, $\hyp_t$ cannot outperform the incumbent and further replication is unnecessary. Conversely, as long as the confidence interval of $\mathrm{MV}(\hyp_t)$ overlaps with the incumbent lower bound, additional evaluations may be informative and are therefore allocated. Appendix~\ref{apdx:k_bound} shows that if a configuration is suboptimal relative to the incumbent lower bound, then ERAHBO will stop sampling it after finitely many samples. 
\subsubsection{Sublinear regret guarantee}\label{guarantee}

We show that the sublinear regret guarantees of RAHBO~\citep{makarova2021risk} carry over to a setting with varying replication counts.
Under the standard RKHS smoothness and sub-Gaussian noise assumptions used in RAHBO, ERAHBO preserves the sublinear regret guarantees of RAHBO when the number of replications per BO round is allowed to vary but remains uniformly bounded, \(k_t \in [k_{\min}, k_{\max}]\). In particular, with high probability, the sample-count cumulative regret satisfies
\begin{equation}
R_T =
\tilde O\!\left(
k_{\max}\sqrt{T\gamma_T} + \alpha k_{\max}\sqrt{T\Gamma_T}
\right),
\end{equation}
where \(\gamma_T\) and \(\Gamma_T\) are the information-gain terms for the mean and variance surrogate models, respectively. Thus, bounded adaptive replication does not change the no-regret behavior: whenever the corresponding RAHBO information-gain terms are sublinear, ERAHBO has vanishing average sample-count regret. Appendix~\ref{apdx:rahbo} gives the full theorem statement, assumptions, and proof.

\section{Experiments}\label{sec:experiment}

We validate our approach on three popular RL algorithms (DQN~\citep{mnih-nature15a}, SAC~\citep{haarnoja-icml18a}, and PPO~\citep{schulman-arxiv17a}) on Brax~\citep{freeman-neurips21a}, XLand-Minigrid~\citep{nikulin2023-neurips23} and Classic Control environments~\citep{gymnax2022github}. 

\noindent
$\textbf{BO baselines}.$ We compare our method against two baselines. GP-UCB~\citep{srinivas-icml10a} implemented by the work of~\citet{makarova2021risk} is our risk-neutral baseline.
RAHBO~\citep{makarova2021risk} is the state-of-the-art risk-averse BO algorithm in our problem setting.
For GP-UCB, we use $k=20$ observations per configuration; for RAHBO, we use two variants, $k=2$ and $k=20$; and ERAHBO starts with $k_{\min} = 2$ and resamples each configuration until~\eqref{eq:adaptive_rule} is satisfied or $k_{\max}=20$ is reached.
For RAHBO and ERAHBO, we use a heteroscedastic GP for modeling $f(x)$ and a homoscedastic GP for $\rho^2(x)$.
We follow common practices by setting $\beta_t = 2.5$. 
We preprocess the dataset using the outcome processing proposed in~\citet{song2024vizier} and choose $\alpha = 1$ for the processed data to have a meaningful mean–variance trade‑off. Generally, the MV objective is scale sensitive (multiplying the performance by a positive scalar will change the optimal configuration) which makes online outcome processing more difficult.
% if the processing is not linear.
For confidence-based resampling, we set $\beta_{\text{stop}}=1$ for~\eqref{eq:adaptive_rule} to allow optimistic stopping. Before the BO procedure, we determine the GP hyperparameters by maximizing the marginal likelihood over $5$ Sobol-sampled initial points that are the same for all experiments. We repeat each experiment $20$ times with new initial points for every repetition. To tackle the high-dimensional search space, we apply the GP kernel hyperparameter prior that scales with the dimensionality of the search space, as proposed in~\citet{hvarfner2024vanilla}. 

\textbf{RL Tasks.} We use ARLBench~\citep{becktepe2024arlbench} to produce performance datasets on which we run our experiments.
In total, we use 5 environments for DQN, 10 for PPO and 4 for SAC.
For each environment, we compute a dataset consisting of $512$ configurations with $50$ seeds each for more repeatable experimentation.\footnote{The code and the dataset can be found at~\href{https://github.com/LUH-AI/Efficient-Risk-Averse-BO}{\texttt{https://github.com/LUH-AI/Efficient-Risk-Averse-BO}}.}
The full algorithm-environment combination is documented in Appendix~\ref{app-sec:algo-env}.
We use the configuration spaces proposed in ARLBench and its default network architectures.
As the performance objective, we use the mean evaluation return across 128 evaluation episodes at the end of training. 
Since environments differ in reward scales, we report the normalized regret to the average score of BO's initial points in each environment's dataset. 
We report additional empirical results in Appendix~\ref{app:full_results}.

\subsection{Can ERAHBO find reliable configurations?}

\begin{figure}
    \centering
    \includegraphics[width=0.95\linewidth]{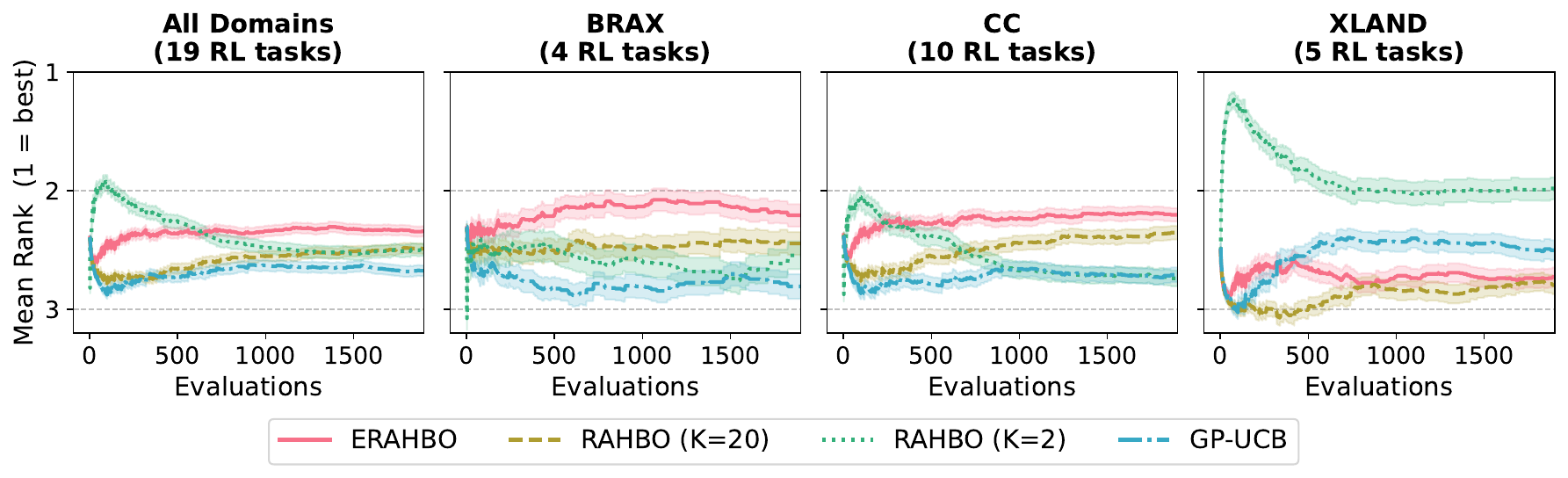}
    \caption{Mean-variance regret ranks aggregated by domain with lines and shaded area showing the mean and standard error, respectively. ERAHBO performs best overall. We observe different optimal values of $k$ in RAHBO across domains.}
    \label{fig:rank_over_time}
\end{figure}

We optimize the hyperparameters of PPO, DQN and SAC to see if ERAHBO can outperform RAHBO through its adaptive sampling mechanism.
Figure~\ref{fig:rank_over_time} shows that both RAHBO and ERAHBO outrank the GP-UCB baseline, but that the best value for RAHBO's $k$ varies across environments.
On Brax for example, $k=20$ outperforms $k=2$ while the opposite is true in XLand.
The environments ERAHBO does not perform well on are predominantly from XLand, which contains environments for which we see only very few good performances overall. This makes it hard to find a good incumbent and leads to many repetitions preventing exploration.

Overall, ERAHBO outperforms both RAHBO variants in a total of $7$ out of $19$ algorithm-environment pairs, and at least one RAHBO variation $16$ out of $19$ times.
In Table~\ref{tab:algo_rank_mean}, the Wilcoxon signed-rank test shows a statistically significant improvement of ERAHBO over RAHBO with $k=20$ and GP-UCB.
This means the adaptive sampling can automatically strike a balance between the different versions of RAHBO and eliminates the need for manually selecting $k$. We document all rank metrics-related results in Appendix~\ref{app-sec:rank_regret}, including a comparison of ERAHBO with RAHBO using different repetitions $k$.

\begin{table}[htbp]
  \centering
  \caption{Mean $\pm$ standard error of algorithm rank across experiments (lower is better). Bold values are not statistically significantly worse than best.}
  \begin{adjustbox}{max width=\linewidth}
  \begin{tabular}{lcccc}
    \toprule
    Metric & ERAHBO & RAHBO~(k=20) & RAHBO~(k=2) & GP-UCB \\
    \midrule
    Simple Regret MV & \textbf{1.79} $\pm$ \textbf{0.77} & 2.42 $\pm$ 1.04 & \textbf{2.37 }$\pm$ \textbf{1.09} & 3.42 $\pm$ 0.88 \\
    Cumulative Regret MV & \textbf{1.79} $\pm$ \textbf{0.77} & 2.58 $\pm$ 0.82 & \textbf{2.32} $\pm$ \textbf{1.34} & 3.32 $\pm$ 0.86 \\
    \bottomrule
  \end{tabular}
  \end{adjustbox}
  \label{tab:algo_rank_mean}
\end{table}

\subsection{Is ERAHBO making HPO more sample efficient?}
ERAHBO not only balances additional evaluations for the best performance tradeoff, but it is also more efficient in terms of evaluations than RAHBO or GP-UCB. In Table~\ref{tab:threshold_crossing_agg}, we report the evaluation number where the algorithms achieve a regret threshold measured as the percent of the initial design's regret, and we count the tasks where all algorithms reach this threshold for easier comparability. 
ERAHBO achieves sample efficiency comparable to RAHBO with $k=2$, reaching the regret thresholds after a similar number of evaluations. However, as discussed above, RAHBO with $k=2$ yields worse configurations after the full evaluation budget. Compared with RAHBO with $k=20$ and GP-UCB, ERAHBO reaches almost all the regret thresholds significantly faster.

\begin{table}[htbp]
  \centering
  \caption{
  Mean $\pm$ standard error number of evaluations needed to improve over regret thresholds. We average tasks where all methods reach the threshold (count per method in brackets).}
  \begin{adjustbox}{max width=\linewidth}
  \begin{tabular}{lrrrr}
    \toprule
    Threshold & ERAHBO & RAHBO (k=20) & RAHBO (k=2) & GP-UCB  \\
    \midrule
    75\% ($n_{\mathrm{common}}=15$) & $40.8 \pm 5.9$ (18) & $62.3 \pm 9.1$ (17) & $80.6 \pm 43.2$ (17) & $83.7 \pm 25.1$ (17) \\
    50\% ($n_{\mathrm{common}}=12$) & $107.6 \pm 14.9$ (16) & $132.7 \pm 15.7$ (16) & $139.2 \pm 91.6$ (15) & $197.7 \pm 52.5$ (14) \\
    25\% ($n_{\mathrm{common}}=11$) & $193.6 \pm 32.7$ (13) & $304.6 \pm 41.5$ (13) & $147.9 \pm 72.2$ (12) & $359.2 \pm 98.5$ (12) \\
    \bottomrule
  \end{tabular}
  \end{adjustbox}
  \label{tab:threshold_crossing_agg}
\end{table}

\begin{figure}
    \centering
    \includegraphics[width=0.95\linewidth]{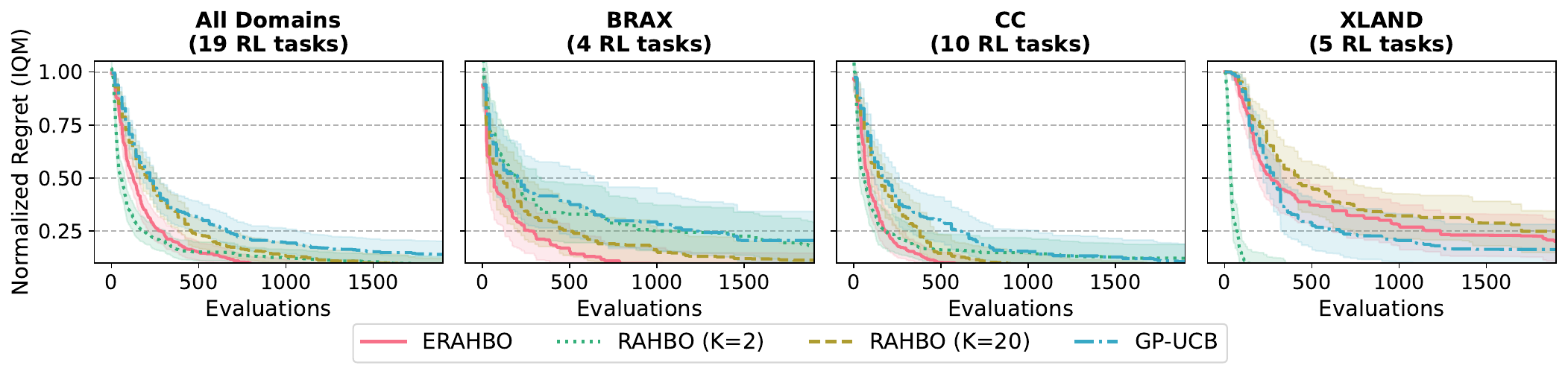}    
    \caption{Domain-aggregated normalized regret over evaluation budgets. Lines and shaded areas show the inter-quantile mean (IQM) and $95\%$ bootstrapped confidence interval (CI), respectively.}
    \label{fig:regret_aggregated}
\end{figure}

Furthermore, Figure~\ref{fig:regret_aggregated} shows normalized mean regret for all domains. 
Compared to GP-UCB, both ERAHBO and RAHBO can have better anytime performance. 
Therefore, they are better suited to low-budget HPO, as is often necessary in RL.
As in the overall ranks, we observe $k$ being an essential HP for RAHBO. 
With a lower $k$, RAHBO is very efficient for small budgets, but cannot effectively utilize larger optimization budgets.
With $k=20$, on the other hand, RAHBO becomes inefficient with small budgets.
ERAHBO can adapt to different budgets without HPO for its own HPs, making it a more flexible method. We note that the pattern depends on the risk-sensitivity parameter $\alpha$, since large values can prevent BO from exploring high-mean performance configurations. More detailed results are available in Appendix~\ref{app-sec:regret_over_time} and~\ref{app-sec:threshold_efficiency}.

\subsection{Does the adaptive sampling focus on well-performing configurations?}
\label{sec-k-adaptation}
Beyond being more efficient than RAHBO, we want to verify that ERAHBO actually prioritizes promising configurations and spends very few evaluations on poorly performing ones.
Figure~\ref{fig:ks} shows the distribution of repetitions across all evaluations.
These results indicate that ERAHBO allocates more budget to better configurations.
Additionally, we note that scheduling $\beta$ in the incumbent lower bound in \eqref{eq:inc_lcb} further improves the performance of ERAHBO (see Appendix~\ref{app-sec:schedule_ERAHBO_beta}).
\begin{figure}
    \centering
    \includegraphics[width=0.95\linewidth]{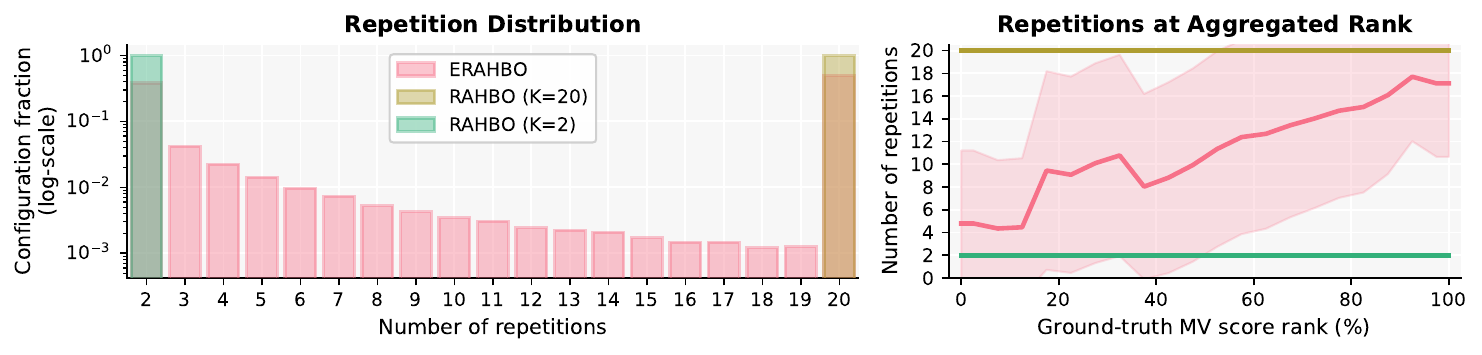}
    \caption{Repetition numbers in dataset experiments. \textbf{Left}: Compared with RAHBO using fixed sampling repetition, ERAHBO rejects the majority of configurations without spending the full budget. \textbf{Right}: ERAHBO invests more repetition in promising configurations with higher MV scores, demonstrating adaptivity through the confidence-based resampling behavior. 
    }
    \label{fig:ks}
\end{figure}

\section{Conclusion}
We identify variability between repeated runs as a key difficulty factor for HPO in RL and propose to use risk-aware optimization to find an ideal variability-performance tradeoff.
To do so efficiently for computationally expensive RL algorithms, we introduce ERAHBO, an adaptive sampling mechanism that automatically adds more samples only in regions of the search space that show high uncertainty.
We provide sublinear regret guarantees and demonstrate empirically that ERAHBO improves upon its baselines in terms of overall performance and efficiency.
Furthermore, the ERAHBO successfully identifies which configurations to prioritize and which to discard early.

We show that ERAHBO can improve the practical efficiency of risk-aware hyperparameter optimization by adaptively allocating samples to configurations where uncertainty matters most. However, our approach optimizes a mean-variance objective, which is only one notion of risk: the risk parameter $\alpha$ is scale-sensitive and does not directly target tail events or failure probabilities. Moreover, our regret guarantee is a conservative worst-case bound over all sample-count schedules $k_t\in[k_{\min},k_{\max}]$ rather than a tight characterization of the specific adaptive rule. Finally, our empirical results rely on offline datasets of learning outcomes; while this enables broad and reproducible comparisons, it is not identical to fully online HPO with fresh training runs.

While we demonstrate the benefits of risk-aware HPO and ERAHBO's adaptive budget allocation for RL algorithms, our empirical evaluation shows that not all environments benefit equally, as we see in the XLand environments.
The dataset we release with this paper will facilitate further research into the factors that influence risk in different RL settings.
This will help the community and us to further develop adaptive risk-aware HPO methods for the wide range of RL task settings.
We see ERAHBO's sampling strategy as a first step in this direction. 

%%%%%%%%%%%%%%%%%%%%%%%%%%%%%%%%%%%%%%%%%%%%%%%%%%%%%%%%%%%%%%%%
%% Appendices
%%%%%%%%%%%%%%%%%%%%%%%%%%%%%%%%%%%%%%%%%%%%%%%%%%%%%%%%%%%%%%%%
\FloatBarrier
\clearpage
\appendix

\section{Regret bound for ERAHBO}\label{apdx:rahbo}

We follow the setup of~\citet{makarova2021risk}. Throughout, let \(k_t:=k(\hyp_t)\in[k_{\min},k_{\max}]\) with \(k_{\min}\ge 2\), so that the unbiased sample variance is well-defined. We assume normalized kernels, \(\kappa(\hyp,\hyp)\le 1\) and \(\kappa^{\mathrm{var}}(\hyp,\hyp)\le 1\) for all \(\hyp\in\Hyp\). We consider the BO observation model (Eq.~\eqref{eq:bo_observation}) with heteroscedastic noise, and denote by \(\mu^f_{t-1}(\hyp\mid\Sigma)\) and \(\sigma^f_{t-1}(\hyp\mid\Sigma)\) the GP posterior mean and standard deviation of the mean model with noise matrix \(\Sigma\).

\setcounter{lemma}{0}
\begin{lemma}[Lemma 7 in~\citet{kirschner2018information}]\label{lemma1}
Let \(f\in\mathcal H_\kappa\), and let \(\mu_t(\cdot)\) and \(\sigma_t^2(\cdot)\) be the GP posterior mean and variance \(\lambda>0\). Assume that the observations \((\hyp_t,y_t)_{t\ge 1}\) satisfy Eq.~\eqref{eq:bo_observation}. Then for all \(t\ge 1\) and \(\hyp\in\Hyp\), with probability at least \(1-\delta\),
\begin{align}\label{eq:confidence_bound}
|\mu_{t-1}(\hyp)-f(\hyp)|
\le
\underbrace{\left(
\sqrt{2\ln\!\left(
\frac{{\det(\lambda\Sigma_t+K_t)}^{1/2}}
{\delta\,\det(\lambda\Sigma_t)^{1/2}}
\right)}
+\sqrt{\lambda}\|f\|_\kappa
\right)}_{:=\beta_t}
\,\sigma_{t-1}(\hyp).
\end{align}
\end{lemma}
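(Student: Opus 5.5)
This is the self-normalized concentration bound for vector-valued martingales (Abbasi-Yadkori et al.), written in RKHS form with heteroscedastic weights. I would prove it by reduction to that result rather than from scratch.

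\textbf{Step 1: feature form.} Let \(\phi(\hyp)\) be the canonical feature map, so that \(f(\hyp)=\langle f,\phi(\hyp)\rangle_\kappa\). Stack the observations as \(y=\Phi_t f+\xi\), where \(\Phi_t\) has rows \(\phi(\hyp_s)^\top\) and \(\xi=(\noise(\hyp_s))_{s<t}\). The posterior mean with noise matrix \(\Sigma_t\) and regularizer \(\lambda\) is the weighted kernel ridge estimate
\[
\mu_{t-1}(\hyp)=\phi(\hyp)^\top V_t^{-1}\Phi_t^\top\Sigma_t^{-1}y,
\qquad
V_t=\Phi_t^\top\Sigma_t^{-1}\Phi_t+\lambda I .
\]
It satisfies \(\sigma_{t-1}^2(\hyp)=\lambda\,\phi(\hyp)^\top V_t^{-1}\phi(\hyp)\), up to the normalization convention, which I will fix so that the constants match \eqref{eq:confidence_bound}.

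\textbf{Step 2: error decomposition.} Substituting \(y\) gives
\[
\mu_{t-1}(\hyp)-f(\hyp)=\phi(\hyp)^\top V_t^{-1}\Phi_t^\top\Sigma_t^{-1}\xi-\lambda\,\phi(\hyp)^\top V_t^{-1}f .
\]
Apply Cauchy--Schwarz in the \(V_t^{-1}\) norm to each term. For the bias term, \(\lambda\|V_t^{-1/2}f\|\le\sqrt\lambda\|f\|_\kappa\) because \(V_t\succeq\lambda I\). This produces the \(\sqrt\lambda\|f\|_\kappa\,\sigma_{t-1}(\hyp)\) contribution, after accounting for the \(\sqrt\lambda\) factor in \(\sigma_{t-1}\).

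\textbf{Step 3: noise term (main obstacle).} The noise term is bounded by \(\|\Phi_t^\top\Sigma_t^{-1}\xi\|_{V_t^{-1}}\,\|\phi(\hyp)\|_{V_t^{-1}}\). The first factor is a self-normalized martingale norm. Set \(\eta_s=\noise(\hyp_s)/\rho(\hyp_s)\). Each \(\eta_s\) is conditionally 1-sub-Gaussian given the past, because \(\hyp_s\) is chosen from past data and the noise is independent across rounds. With the rescaled features \(\tilde\phi_s=\phi(\hyp_s)/\rho(\hyp_s)\) and \(\Sigma_t=\mathrm{diag}(\rho^2(\hyp_s))\), we get \(\Phi_t^\top\Sigma_t^{-1}\xi=\sum_s\tilde\phi_s\eta_s\) and \(V_t=\sum_s\tilde\phi_s\tilde\phi_s^\top+\lambda I\).

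Abbasi-Yadkori's theorem, proved via the method of mixtures with a Gaussian prior and Ville's inequality, then gives, uniformly over all \(t\) with probability \(1-\delta\),
\[
\Big\|\sum_s\tilde\phi_s\eta_s\Big\|^2_{V_t^{-1}}\le 2\ln\!\big(\det(V_t)^{1/2}\lambda^{-d/2}/\delta\big).
\]
The delicate point is the infinite-dimensional determinant. Sylvester's identity \(\det(I+\lambda^{-1}\tilde\Phi^\top\tilde\Phi)=\det(I+\lambda^{-1}\Sigma_t^{-1/2}K_t\Sigma_t^{-1/2})\) converts it into the Gram-matrix ratio \(\det(\lambda\Sigma_t+K_t)/\det(\lambda\Sigma_t)\) appearing in \(\beta_t\).

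\textbf{Step 4: uniformity in \(\hyp\).} Uniformity over \(\hyp\) needs no union bound. The martingale bound does not depend on \(\hyp\), and the \(\hyp\)-dependence enters only through the deterministic factor \(\|\phi(\hyp)\|_{V_t^{-1}}\propto\sigma_{t-1}(\hyp)\). Combining Steps 2--4 yields \(|\mu_{t-1}(\hyp)-f(\hyp)|\le\beta_t\sigma_{t-1}(\hyp)\) for all \(t\ge1\) and all \(\hyp\in\Hyp\) simultaneously. The only fiddly part is keeping the \(\lambda\)-scaling conventions consistent.
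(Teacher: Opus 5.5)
Your reduction is correct and follows the route the statement relies on. The paper gives no proof of its own and imports the bound from Kirschner and Krause, whose argument is this one: the reweighted self-normalized bound of Abbasi-Yadkori et al.\ (features $\phi(x_s)/\rho(x_s)$, noise $\xi(x_s)/\rho(x_s)$), followed by the determinant identity $\det(V_t)/\lambda^{d}=\det(\lambda\Sigma_t+K_t)/\det(\lambda\Sigma_t)$.

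Two small points should be tightened. First, in an infinite-dimensional RKHS you should not invoke the $\mathbb{R}^d$ theorem and then apply Sylvester. Instead, run the mixture step with a GP prior of kernel $\kappa/\lambda$, whose finite marginals give the Gram-matrix determinant ratio directly. Second, the constants in $\beta_t$ are exact for $\sigma_{t-1}(x)=\|\phi(x)\|_{V_t^{-1}}$. Your convention $\sigma_{t-1}^2=\lambda\,\phi^\top V_t^{-1}\phi$ costs an extra factor $\lambda^{-1/2}$; this is irrelevant for $\lambda=1$, which is the value the theorem uses.
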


In each BO round \(t\), ERAHBO collects \(k_t\) independent samples at \(\hyp_t\), denoted \(\{y_t^{(i)}\}_{i=1}^{k_t}\), and forms \(\hat m(\hyp_t):=k_t^{-1}\sum_{i=1}^{k_t} y_t^{(i)}\). Using Lemma~\ref{lemma1} for the mean GP posterior, define
\begin{align}
\label{eq:ucb_lcb_f}
\mathrm{UCB}^f_t(\hyp)
&:=\mu^f_{t-1}(\hyp\mid\widehat\Sigma_{t-1})
+\beta_t^f\,\sigma^f_{t-1}(\hyp\mid\widehat\Sigma_{t-1}),\\
\mathrm{LCB}^f_t(\hyp)
&:=\mu^f_{t-1}(\hyp\mid\widehat\Sigma_{t-1})
-\beta_t^f\,\sigma^f_{t-1}(\hyp\mid\widehat\Sigma_{t-1}).
\end{align}

We make the same smoothness assumption on the variance proxy as in~\citet{makarova2021risk}.

\begin{assumption}[Assumption 1 in \cite{makarova2021risk}]
\label{assumption1}
    The variance-proxy \( \rho^2(\hyp) \) belongs to an RKHS induced by some kernel \( \kappa^{\text{var}} \), i.e., \( \rho^2 \in \mathcal{H}_{\kappa^{\text{var}}} \), and its RKHS norm is bounded \( \| \rho^2 \|_{\kappa^{\text{var}}} \leq B_{\text{var}} \) for some finite \( B_{\text{var}} > 0 \). Moreover, the noise \( \noise(\hyp) \) in Eq. (\ref{eq:bo_observation}) is strictly \( \rho(\hyp) \)-sub-Gaussian, i.e., \( \text{Var}[\noise(\hyp)] = \rho^2(\hyp) \) for every \( \hyp \in \Hyp \).
\end{assumption}

\begin{assumption}[Analogue of Assumption 2 in~\citet{makarova2021risk}]
\label{assumption2}
Let \(\mathcal F_{t-1}\) denote the BO history before round \(t\). The noise \( \eta(\hyp_t,k_t) \) in \(\hat{s}^2(\hyp_t) = \rho^2(\hyp_t) + \eta(\hyp_t,k_t)\) is conditionally \( \rho_\eta(\hyp_t,k_t) \)-sub-Gaussian given \(\mathcal F_{t-1}\), \(\hyp_t\), and \(k_t\), with known \( \rho_\eta^2(\hyp_t,k_t) \). The realizations \( \{ \eta(\hyp_t,k_t) \}_{t \ge 1} \) are conditionally independent across \( t \) given the corresponding histories and selected pairs \((\hyp_t,k_t)\).
\end{assumption}

In the variable-sample setting, the variance-observation proxy may depend on both the query point and the replication count, i.e., $\rho_\eta^2(\hyp_t,k_t)$. The regret analysis only requires a uniform bound on these proxies. Thus, it suffices to assume that there exists $\mathcal R^2$ such that
\[
\mathcal R^2 \ge \sup_{\hyp\in\Hyp,\; k\in[k_{\min},k_{\max}]} \rho_\eta^2(\hyp,k),
\qquad
\mathcal R^2 = \frac{2\bar{\rho}^4}{k_{\min}-1}
\ \text{is valid under strict sub-Gaussianity}.
\]

Let \( \mu^{\mathrm{var}}_{t-1} \) and \( \sigma^{\mathrm{var}}_{t-1} \) denote the variance-GP posterior mean and standard deviation built with \(\Sigma^{\mathrm{var}}_{t-1}:=\mathrm{diag}(\rho_\eta^2(\hyp_1,k_1),\ldots,\rho_\eta^2(\hyp_{t-1},k_{t-1}))\). Using Lemma~\ref{lemma1} for the variance GP posterior, define
\begin{align}
\label{eq:ucb_lcb_var}
\mathrm{UCB}^{\mathrm{var}}_t(\hyp)
&:=\mu^{\mathrm{var}}_{t-1}(\hyp)
+\beta_t^{\mathrm{var}}\,\sigma^{\mathrm{var}}_{t-1}(\hyp),\\
\mathrm{LCB}^{\mathrm{var}}_t(\hyp)
&:=\mu^{\mathrm{var}}_{t-1}(\hyp)
-\beta_t^{\mathrm{var}}\,\sigma^{\mathrm{var}}_{t-1}(\hyp).
\end{align}

Following~\citet{makarova2021risk}, we construct
\begin{align}
\label{eq:Sigma_hat_appendix}
\widehat\Sigma_{t-1}
:=\mathrm{diag}_{s<t}\!\left(
\frac{\min\{\mathrm{UCB}^{\mathrm{var}}_{t-1}(\hyp_s),\overline\rho^2\}}{k_s}
\right).
\end{align}
On the simultaneous variance-confidence event, \( \rho^2(\hyp_s)\le \mathrm{UCB}^{\mathrm{var}}_{t-1}(\hyp_s)\) for all \(s\le t-1\), and by assumption \(\rho^2(\hyp_s)\le\overline\rho^2\). Hence the construction above is used as a conservative noise matrix for the sample-mean observations. Equivalently, throughout the proof we condition on the event that
\[
\frac{\rho^2(\hyp_s)}{k_s}
\le
\frac{\min\{\mathrm{UCB}^{\mathrm{var}}_{t-1}(\hyp_s),\overline\rho^2\}}{k_s},
\qquad s<t,
\]
so Lemma~\ref{lemma1} applies to the mean GP with noise matrix \(\widehat\Sigma_{t-1}\).

For convenience, define \(N_T:=\sum_{t=1}^T k_t\) and \(\bar\beta_T^\ell:=\max_{1\le s\le T}\beta_s^\ell\), \(\ell\in\{f,\mathrm{var}\}\).

We use the sequence-based information gain terms
\begin{align}
\label{eq:gamma_bar_def}
\overline{\gamma}_T
&:=
\max_{\substack{\hyp_1,\ldots,\hyp_T\in\Hyp\\ k_1,\ldots,k_T\in[k_{\min},k_{\max}]}}
\sum_{t=1}^{T}
\frac{1}{2}\ln\!\left(
1+\frac{\bigl(\sigma^f_{t-1}\bigr)^2(\hyp_t\mid D_{t-1})}{\overline\rho^2/k_t}
\right),\\
\label{eq:Gamma_def}
\Gamma_T
&:=
\max_{\substack{\hyp_1,\ldots,\hyp_T\in\Hyp\\ k_1,\ldots,k_T\in[k_{\min},k_{\max}]}}
\sum_{t=1}^{T}
\frac{1}{2}\ln\!\left(
1+\frac{\bigl(\sigma^{\mathrm{var}}_{t-1}\bigr)^2(\hyp_t)}{\rho_{\eta,t}^2}
\right).
\end{align}
Here \(D_{t-1}:=\mathrm{diag}(\overline\rho^2/k_1,\ldots,\overline\rho^2/k_{t-1})\).

We now adapt the regret analysis of~\citet{makarova2021risk} to variable sample counts.

\setcounter{theorem}{0}
\begin{theorem}\label{thm2_apdx}
Consider any \( f \in \mathcal{H}_\kappa \) with \( \| f \|_\kappa \le B_f \), and the BO observation model in Eq.~\eqref{eq:bo_observation} with an unknown variance proxy \(\rho^2(\hyp)\) that satisfies Assumptions~\ref{assumption1} and~\ref{assumption2}. 
Let \( \{ \hyp_t \}_{t=1}^T \) denote the sequence of query points selected by ERAHBO over \(T\) BO rounds, where in round \(t\) the algorithm collects \(k_t \in [k_{\min},k_{\max}]\) samples at \(\hyp_t\). 
We assume that the aggregated observations used to update the mean and variance GPs satisfy the observation models stated above; in particular, this holds when \(k_t\) is fixed before observing the samples in round \(t\).
Let \( \{ \beta_t^f \}_{t=1}^T \) and \( \{ \beta_t^{\mathrm{var}} \}_{t=1}^T \) be defined according to Lemma~\ref{lemma1} (with \(\lambda=1\)). 
Set \(\rho(\cdot)\in[\underline\rho,\overline\rho]\) and let \(\mathcal R^2\) be any uniform bound on the variance-GP observation proxies.
Then, with probability at least \(1-2\delta\), for all \(T \ge 1\),
\begin{align}
\label{eq:thm2_bound_apdx}
R_T \;\le\;
2\,\bar\beta^f_T \, k_{\max}\,
\sqrt{\frac{2T\, \overline{\gamma}_T}{\ln\!\bigl(1 + k_{\max} / \overline{\rho}^2\bigr)}}
\;+\;
2\,\alpha \,\bar\beta_T^{\mathrm{var}} \, k_{\max}\,
\sqrt{\frac{2T\, \Gamma_T}{\ln\!\bigl(1 + \mathcal{R}^{-2}\bigr)}},
\end{align}
where \(\bar\beta_T^f := \max_{1\le s\le T}\beta_s^f\), \(\bar\beta_T^{\mathrm{var}} := \max_{1\le s\le T}\beta_s^{\mathrm{var}}\), and \(\overline{\gamma}_T\) and \(\Gamma_T\) are defined in \eqref{eq:gamma_bar_def}--\eqref{eq:Gamma_def}.
\end{theorem}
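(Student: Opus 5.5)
The plan is to follow the standard GP-UCB regret argument of \citet{makarova2021risk}, carrying the factor $k_t$ along. The steps are: establish a good event, bound each per-round regret by posterior widths, use $k_t\le k_{\max}$, and then apply Cauchy--Schwarz together with an information-gain bound.

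\textbf{Step 1: the good event.} Apply Lemma~\ref{lemma1} to the variance GP. Its observations are $\hat s^2(\hyp_t)=\rho^2(\hyp_t)+\eta(\hyp_t,k_t)$, with conditionally sub-Gaussian noise by Assumption~\ref{assumption2}, and $\rho^2\in\mathcal H_{\kappa^{\mathrm{var}}}$ by Assumption~\ref{assumption1}. With probability at least $1-\delta$, for all $t$ and $\hyp$,
\[
\mathrm{LCB}^{\mathrm{var}}_t(\hyp)\le\rho^2(\hyp)\le\mathrm{UCB}^{\mathrm{var}}_t(\hyp).
\]
On this event, $\widehat\Sigma_{t-1}$ dominates the true noise covariance $\mathrm{diag}(\rho^2(\hyp_s)/k_s)$ of the sample means. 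This uses that $k_t$ is fixed, or at least does not bias the aggregates, as the statement assumes. Lemma~\ref{lemma1} therefore applies to the mean GP with a conservative noise matrix, giving $|\mu^f_{t-1}-f|\le\beta^f_t\sigma^f_{t-1}$ with probability at least $1-\delta$. A union bound yields the $1-2\delta$ event, and we condition on it throughout.

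\textbf{Step 2: per-round regret.} On the good event, $\mathrm{UCB}^{\mathrm{MV}}_t$ is an upper bound on $\mathrm{MV}$, and $\hyp_t$ maximizes it. Hence
\[
\mathrm{MV}(\hyp^*)-\mathrm{MV}(\hyp_t)\le \mathrm{UCB}^{\mathrm{MV}}_t(\hyp_t)-\mathrm{MV}(\hyp_t)\le 2\beta^f_t\sigma^f_{t-1}(\hyp_t\mid\widehat\Sigma_{t-1})+2\alpha\beta^{\mathrm{var}}_t\sigma^{\mathrm{var}}_{t-1}(\hyp_t).
\]
Multiplying by $k_t\le k_{\max}$ and bounding $\beta_t^\ell\le\bar\beta_T^\ell$ gives
\[
R_T\le 2\bar\beta^f_Tk_{\max}\sum_t\sigma^f_{t-1}(\hyp_t\mid\widehat\Sigma_{t-1})+2\alpha\bar\beta^{\mathrm{var}}_Tk_{\max}\sum_t\sigma^{\mathrm{var}}_{t-1}(\hyp_t).
\]
By Cauchy--Schwarz, each sum is at most $\sqrt{T\sum_t\sigma_{t-1}^2}$.

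\textbf{Step 3: from variances to information gain.} This is the step I expect to be the main obstacle. Each noise entry of $\widehat\Sigma_{t-1}$ is at most $\overline\rho^2/k_s$, and posterior variance is monotone in the noise covariance. So I would replace $\sigma^f_{t-1}(\cdot\mid\widehat\Sigma_{t-1})$ by $\sigma^f_{t-1}(\cdot\mid D_{t-1})$. The delicate point is the direction of this comparison: larger noise gives larger posterior variance, and $\widehat\Sigma\le D$ entrywise, so $\sigma(\cdot\mid\widehat\Sigma)\le\sigma(\cdot\mid D)$, which is the direction needed. Next, using $\sigma^2\le\kappa\le 1$, I would apply the standard inequality $s\le \frac{c}{\ln(1+c)}\ln(1+s)$ for $s\in[0,c]$. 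With $s=\sigma^2 k_t/\overline\rho^2$ and $c=k_t/\overline\rho^2\le k_{\max}/\overline\rho^2$, this gives
\[
\sigma^2\le\frac{\overline\rho^2}{k_t}\cdot\frac{k_{\max}/\overline\rho^2}{\ln(1+k_{\max}/\overline\rho^2)}\ln\!\bigl(1+\sigma^2k_t/\overline\rho^2\bigr),
\]
and since $k_t\ge1$, $\sigma^2\le \frac{k_{\max}}{\ln(1+k_{\max}/\overline\rho^2)}\ln(1+\sigma^2k_t/\overline\rho^2)$. There is a possible leftover factor of $k_{\max}$ here. If it cannot be absorbed, I would instead use the cruder ratio $c/\ln(1+c)$ with the weaker $c=1/\overline\rho^2$ and adjust constants, aiming to reach exactly $\sum_t\sigma^2\le 2\overline\gamma_T/\ln(1+k_{\max}/\overline\rho^2)$ after reconciling normalizations. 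Summing the logarithms and taking the maximum over sequences $(\hyp_t,k_t)$ yields $\overline\gamma_T$ from \eqref{eq:gamma_bar_def}.

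\textbf{Step 4: variance term and conclusion.} For the variance GP the argument is analogous but simpler. Using $\rho_{\eta,t}^2\le\mathcal R^2$ and $\sigma^2\le1$, the same inequality with $c=\mathcal R^{-2}$ gives
\[
\sum_t(\sigma^{\mathrm{var}}_{t-1})^2\le\frac{2\Gamma_T}{\ln(1+\mathcal R^{-2})}.
\]
Substituting both bounds into Step 2 gives \eqref{eq:thm2_bound_apdx}.
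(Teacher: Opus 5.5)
Your Steps 1, 2 and 4 are sound and essentially match the paper. For the variance term, pulling out $k_{\max}$ and using $\sum_t(\sigma^{\mathrm{var}}_{t-1})^2(\hyp_t)\le 2\Gamma_T/\ln(1+\mathcal R^{-2})$ reproduces the second term exactly. The gap is in the mean term. You bound $k_t\le k_{\max}$ \emph{before} applying Cauchy--Schwarz, so you then need the unweighted inequality $\sum_t(\sigma^f_{t-1})^2(\hyp_t\mid\widehat\Sigma_{t-1})\le 2\overline\gamma_T/\ln(1+k_{\max}/\overline\rho^2)$. This inequality is false for general schedules $k_t\in[k_{\min},k_{\max}]$, and general schedules are all your argument has access to, so no ``reconciling of normalizations'' can produce it. Take $\kappa\equiv 1$, $\rho\equiv\overline\rho$ (so on the good event $\widehat\Sigma_{t-1}=D_{t-1}$), and $k_t\equiv k_{\min}$. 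Then $(\sigma^f_{t-1})^2=1/(1+(t-1)k_{\min}/\overline\rho^2)$, so the left side grows like $(\overline\rho^2/k_{\min})\ln T$. Meanwhile $\overline\gamma_T=\frac12\ln(1+Tk_{\max}/\overline\rho^2)$, so the right side grows like $\ln T/\ln(1+k_{\max}/\overline\rho^2)$. With $\overline\rho=1$, $k_{\min}=2$, $k_{\max}=20$ (the paper's settings) the inequality fails for large $T$.

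What your route actually yields, via $\ln(1+as)\ge s\ln(1+a)$ for $s\in[0,1]$, is $\sum_t(\sigma^f_{t-1})^2\le 2\overline\gamma_T/\ln(1+k_{\min}/\overline\rho^2)$. Your fallback with $c=1/\overline\rho^2$ gives the still weaker denominator $\ln(1+1/\overline\rho^2)$. Using the ratio at $c=k_{\max}/\overline\rho^2$ leaves an extra factor $k_{\max}/k_{\min}$ under the root. Each of these is a valid regret bound, but none is the stated one.

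The missing idea is to keep the weights inside the sum and split each $k_t$ between the two Cauchy--Schwarz factors. Define $u_t:=k_t(\sigma^f_{t-1})^2(\hyp_t\mid D_{t-1})/\overline\rho^2$. This is the posterior variance normalized by the actual noise level $\overline\rho^2/k_t$, so $\sum_t\frac12\ln(1+u_t)\le\overline\gamma_T$ by definition. Write $k_t\,\sigma^f_{t-1}(\hyp_t\mid D_{t-1})=\overline\rho\sqrt{k_t}\sqrt{u_t}$, which gives $\sum_t k_t\,\sigma^f_{t-1}(\hyp_t\mid D_{t-1})\le\overline\rho\sqrt{N_T}\sqrt{\sum_t u_t}$. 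Since $u_t\le k_{\max}/\overline\rho^2$, you get $\sum_t u_t\le 2k_{\max}\overline\gamma_T/\bigl(\overline\rho^2\ln(1+k_{\max}/\overline\rho^2)\bigr)$. Then $N_T\le Tk_{\max}$ gives exactly $k_{\max}\sqrt{2T\overline\gamma_T/\ln(1+k_{\max}/\overline\rho^2)}$.

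One factor $\sqrt{k_t}$ converts $(\sigma^f_{t-1})^2$ into the noise-normalized $u_t$, and this is what lets the logarithmic ratio be evaluated at the best precision $k_{\max}/\overline\rho^2$. The other factor becomes $\sqrt{N_T}$. Compare Step 4: there the uniform bound $\mathcal R^2$ enters as the worst-case precision, the analogue of $k_{\min}$. In low-$k_t$ rounds the regret weight and the information gained per unit of posterior variance are small together. Replacing $k_t$ by $k_{\max}$ up front destroys that cancellation.
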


\begin{proof}
The proof follows the same steps as Appendix A.4 in~\citet{makarova2021risk}, with the modifications needed for variable sample counts.

\textbf{Step 1 (Confidence bounds for \(\mathrm{MV}\)).}
By the same argument as in~\citet{makarova2021risk}, with probability at least \(1-2\delta\),
\[
\mathrm{LCB}_t^{\mathrm{MV}}(\hyp)\le \mathrm{MV}(\hyp)\le \mathrm{UCB}_t^{\mathrm{MV}}(\hyp),
\qquad \forall \hyp\in\Hyp,\ t\ge 1,
\]
where
\begin{align}
\mathrm{UCB}_t^{\mathrm{MV}}(\hyp)
&=\mathrm{UCB}_t^f(\hyp)-\alpha\,\mathrm{LCB}_t^{\mathrm{var}}(\hyp),\\
\mathrm{LCB}_t^{\mathrm{MV}}(\hyp)
&=\mathrm{LCB}_t^f(\hyp)-\alpha\,\mathrm{UCB}_t^{\mathrm{var}}(\hyp).
\end{align}

\textbf{Step 2 (Instantaneous and cumulative regret).}
Let \(r_t:=\mathrm{MV}(\hyp^*)-\mathrm{MV}(\hyp_t)\). As in~\citet{makarova2021risk},
\begin{align}
\label{eq:inst_regret_bound}
r_t
&\le
\mathrm{UCB}_t^{\mathrm{MV}}(\hyp_t)-\mathrm{LCB}_t^{\mathrm{MV}}(\hyp_t)
\nonumber\\
&=2\beta_t^f\,\sigma^f_{t-1}(\hyp_t\mid\widehat\Sigma_{t-1})
+2\alpha\beta_t^{\mathrm{var}}\,\sigma^{\mathrm{var}}_{t-1}(\hyp_t).
\end{align}
Hence
\begin{align}
\label{eq:cum_regret_split}
R_T
&=\sum_{t=1}^T k_t r_t \nonumber\\
&\le
2\bar\beta_T^f \sum_{t=1}^T k_t\,\sigma^f_{t-1}(\hyp_t\mid\widehat\Sigma_{t-1})
+2\alpha\bar\beta_T^{\mathrm{var}} \sum_{t=1}^T k_t\,\sigma^{\mathrm{var}}_{t-1}(\hyp_t).
\end{align}

\textbf{Step 3 (Bounding the mean-model term).}
Let \(v_t:=\overline\rho^2/k_t\) and \(u_t:=\bigl(\sigma^f_{t-1}\bigr)^2(\hyp_t\mid D_{t-1})/v_t\).
Since
\[
\frac{\min\{\mathrm{UCB}^{\mathrm{var}}_{t-1}(\hyp_s),\overline\rho^2\}}{k_s}
\le \frac{\overline\rho^2}{k_s}=v_s
\qquad \forall s<t,
\]
we have \(\widehat\Sigma_{t-1}\preceq D_{t-1}\), and therefore \(\sigma^f_{t-1}(\hyp_t\mid\widehat\Sigma_{t-1})\le\sigma^f_{t-1}(\hyp_t\mid D_{t-1})\).
By Cauchy--Schwarz,
\begin{align}
\sum_{t=1}^{T} k_t\,\sigma^f_{t-1}(\hyp_t\mid\widehat\Sigma_{t-1})
&\le
\sum_{t=1}^{T} k_t\,\sigma^f_{t-1}(\hyp_t\mid D_{t-1})
\nonumber\\
&=
\sum_{t=1}^{T} k_t \sqrt{v_t}\sqrt{u_t}
=
\overline\rho\sum_{t=1}^{T}\sqrt{k_t}\sqrt{u_t}
\nonumber\\
&\le \overline\rho\sqrt{N_T}\sqrt{\sum_{t=1}^{T}u_t}.
\label{eq:weighted_cs_mean}
\end{align}
Moreover, since \(\lambda=1\) and \(\kappa(\hyp,\hyp)\le 1\), \(0\le u_t\le k_t/\overline\rho^2\le k_{\max}/\overline\rho^2\). Thus,
\[
u_t
\le
\frac{k_{\max}/\overline\rho^2}{\ln\!\bigl(1+k_{\max}/\overline\rho^2\bigr)}
\ln(1+u_t),
\]
and summing over \(t\) gives
\begin{align}
\sum_{t=1}^{T}u_t
&\le
\frac{2k_{\max}/\overline\rho^2}{\ln\!\bigl(1+k_{\max}/\overline\rho^2\bigr)}
\sum_{t=1}^{T}\frac{1}{2}\ln(1+u_t)
\nonumber\\
&\le
\frac{2k_{\max}\,\overline\gamma_T/\overline\rho^2}
{\ln\!\bigl(1+k_{\max}/\overline\rho^2\bigr)}.
\end{align}
Substituting into~\eqref{eq:weighted_cs_mean}, we obtain
\begin{align}
\sum_{t=1}^{T} k_t\,\sigma^f_{t-1}(\hyp_t\mid\widehat\Sigma_{t-1})
\le
\sqrt{
\frac{2k_{\max}N_T\,\overline\gamma_T}
{\ln\!\bigl(1+k_{\max}/\overline\rho^2\bigr)}
}.
\label{eq:sum_sigma_mean}
\end{align}

\textbf{Step 4 (Bounding the variance-model term).}
Define \(w_t:=\bigl(\sigma^{\mathrm{var}}_{t-1}\bigr)^2(\hyp_t)/\rho_{\eta,t}^2\). By Cauchy--Schwarz,
\begin{align}
\sum_{t=1}^{T} k_t\,\sigma^{\mathrm{var}}_{t-1}(\hyp_t)
&\le
\sqrt{\sum_{t=1}^{T} k_t}\,
\sqrt{\sum_{t=1}^{T} k_t\bigl(\sigma^{\mathrm{var}}_{t-1}\bigr)^2(\hyp_t)}
\nonumber\\
&\le
\sqrt{k_{\max}N_T}\,
\sqrt{\sum_{t=1}^{T}\bigl(\sigma^{\mathrm{var}}_{t-1}\bigr)^2(\hyp_t)}.
\label{eq:weighted_cs_var}
\end{align}
Since \(\lambda=1\) and \(\kappa^{\mathrm{var}}(\hyp,\hyp)\le 1\), \(0\le w_t\le \rho_{\eta,t}^{-2}\). Hence
\[
\rho_{\eta,t}^2 w_t
\le
\frac{1}{\ln\!\bigl(1+\rho_{\eta,t}^{-2}\bigr)}\ln(1+w_t)
\le
\frac{1}{\ln\!\bigl(1+\mathcal R^{-2}\bigr)}\ln(1+w_t),
\]
where the second inequality uses \(\rho_{\eta,t}^2\le \mathcal R^2\). Summing over \(t\) yields
\begin{align}
\sum_{t=1}^{T}\bigl(\sigma^{\mathrm{var}}_{t-1}\bigr)^2(\hyp_t)
&=\sum_{t=1}^{T}\rho_{\eta,t}^2 w_t
\nonumber\\
&\le
\frac{1}{\ln\!\bigl(1+\mathcal R^{-2}\bigr)}
\sum_{t=1}^{T}\ln(1+w_t)
\nonumber\\
&=\frac{2\Gamma_T}{\ln\!\bigl(1+\mathcal R^{-2}\bigr)}.
\end{align}
Substituting into~\eqref{eq:weighted_cs_var}, we obtain
\begin{align}
\sum_{t=1}^{T} k_t\,\sigma^{\mathrm{var}}_{t-1}(\hyp_t)
\le
\sqrt{
\frac{2k_{\max}N_T\,\Gamma_T}
{\ln\!\bigl(1+\mathcal R^{-2}\bigr)}
}.
\label{eq:sum_sigma_var}
\end{align}

\textbf{Step 5 (Combine bounds).}
Substituting~\eqref{eq:sum_sigma_mean} and~\eqref{eq:sum_sigma_var} into~\eqref{eq:cum_regret_split} yields~\eqref{eq:thm2_bound_apdx} with \(N_T\le Tk_{\max}\).
\end{proof}

\paragraph{Remark on adaptive resampling.}
The bound is stated for bounded sample counts \(k_t \in [k_{\min},k_{\max}]\) under the aggregate observation model used by RAHBO. ERAHBO's practical stopping rule chooses \(k_t\) by repeatedly checking the MV-UCB at the current point. Since the confidence bounds are uniform over points and rounds, and since the inner loop contains only finitely many checks \(k\in[k_{\min},k_{\max}]\), this provides a valid high-confidence stopping certificate whenever the intermediate GP updates satisfy the stated concentration assumptions, after applying a union or anytime bound over the inner-loop checks if needed. A fully optional-stopping analysis of the stopped sample mean and sample variance is beyond the scope of this work; the theorem should therefore be read as showing that bounded variable replication itself does not degrade the RAHBO regret rate, while our stopping rule is the practical mechanism used to choose such bounded replications.

\clearpage
\subsubsection*{Acknowledgments}
\label{sec:ack}
Mingxuan Che acknowledges funding by the European Union (ERC, ``ixAutoML'', grant no.101041029). Theresa Eimer and Marius Lindauer acknowledge funding by the German Research Foundation (DFG) under LI 2801/7-1 and LI 2801/10-1.
Alexander von Rohr is funded by the Deutsche Forschungsgemeinschaft (DFG) under project 468806714 of the Emmy Noether Programme. Alexander von Rohr also gratefully acknowledges funding from the European Union (ERC, ConSequentIAL, 101165883). Views and opinions expressed are however those of the author(s) only and do not necessarily reflect those of the European Union or the European Research Council. Neither the European Union nor the granting authority can be held responsible for them.
%%%%%%%%%%%%%%%%%%%%%%%%%%%%%%%%%%%%%%%%%%%%%%%%%%%%%%%%%%%%%%%%
%% NOTE: THIS MARKS THE END OF THE "MAIN TEXT"
%%%%%%%%%%%%%%%%%%%%%%%%%%%%%%%%%%%%%%%%%%%%%%%%%%%%%%%%%%%%%%%%

%%%%%%%%%%%%%%%%%%%%%%%%%%%%%%%%%%%%%%%%%%%%%%%%%%%%%%%%%%%%%%%%
%% Bibliography
%%%%%%%%%%%%%%%%%%%%%%%%%%%%%%%%%%%%%%%%%%%%%%%%%%%%%%%%%%%%%%%%
% local edit
\bibliography{lib/local_strings,lib/local,lib/local_lib,lib/local_proc}

\bibliographystyle{rlj}

%%%%%%%%%%%%%%%%%%%%%%%%%%%%%%%%%%%%%%%%%%%%%%%%%%%%%%%%%%%%%%%%
% AUTHOR: If your paper has no supplementary materials, you may 
%         comment out the line below, which creates the title for
%         the supplementary materials.
%%%%%%%%%%%%%%%%%%%%%%%%%%%%%%%%%%%%%%%%%%%%%%%%%%%%%%%%%%%%%%%%
\beginSupplementaryMaterials

\section{Finite-sample certificates}\label{apdx:k_bound}

This section provides a local certificate for ERAHBO's stopping rule in the case where the currently queried configuration can be shown to be suboptimal relative to the incumbent lower bound. Concretely, we derive a sufficient condition under which, after collecting enough samples at the current configuration $\hyp_t$, its optimistic mean-variance estimate drops below the incumbent lower bound with the current confidence bounds, so that additional sampling at $\hyp_t$ is unnecessary.

Fix a BO round $t$ and define
\[
B_t := \max_{\hyp \in \mathcal{D}_{t-1}} \mathrm{LCB}_t^{\mathrm{MV}}(\hyp),
\]
computed once at the start of round $t$ and kept fixed during the inner sampling loop at $\hyp_t$.
For $k\ge 2$, let $\mu^{\mathrm{MV}}_{t,k}(\hyp_t)$, $\sigma^f_{t,k}(\hyp_t)$, and $\sigma^{\mathrm{var}}_{t,k}(\hyp_t)$ denote the corresponding posterior quantities after incorporating the first $k$ samples collected at $\hyp_t$, and define
\[
m_t(k) := B_t - \mu^{\mathrm{MV}}_{t,k}(\hyp_t).
\]
Also define
\[
\mathrm{UCB}_{t,k}^{\mathrm{MV}}(\hyp_t)
:=
\mu^{\mathrm{MV}}_{t,k}(\hyp_t)
+ \beta_t^f \sigma^f_{t,k}(\hyp_t)
+ \alpha \beta_t^{\mathrm{var}} \sigma^{\mathrm{var}}_{t,k}(\hyp_t).
\]

\begin{lemma}[Finite-sample certificate of suboptimality]\label{lem:finite_sample_cert}
Assume $m_t(k)>0$ and
\[
\sigma^f_{t,k}(\hyp_t) \le \frac{\rho(\hyp_t)}{\sqrt{k}},
\qquad
\sigma^{\mathrm{var}}_{t,k}(\hyp_t) \le \frac{c_{\mathrm{var}}}{\sqrt{k}},
\]
for some constants $\rho(\hyp_t)>0$ and $c_{\mathrm{var}}>0$. If
\begin{equation}\label{eq:k_suff}
k \ge
\frac{\bigl(\beta_t^f \rho(\hyp_t) + \alpha \beta_t^{\mathrm{var}} c_{\mathrm{var}}\bigr)^2}{m_t(k)^2},
\end{equation}
then
\[
\mathrm{UCB}_{t,k}^{\mathrm{MV}}(\hyp_t) \le B_t.
\]
\end{lemma}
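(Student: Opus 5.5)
The plan is a direct algebraic argument: the claim is a deterministic consequence of the two assumed posterior-width bounds and the definition of $m_t(k)$, so no probabilistic tools or earlier lemmas (such as Lemma~\ref{lemma1}) are needed. I will rewrite the target inequality $\mathrm{UCB}_{t,k}^{\mathrm{MV}}(\hyp_t)\le B_t$ in terms of the gap $m_t(k)$. Using the definition of $\mathrm{UCB}_{t,k}^{\mathrm{MV}}$ and $m_t(k)=B_t-\mu^{\mathrm{MV}}_{t,k}(\hyp_t)$, the target is equivalent to
\[
\beta_t^f \sigma^f_{t,k}(\hyp_t) + \alpha \beta_t^{\mathrm{var}} \sigma^{\mathrm{var}}_{t,k}(\hyp_t) \le m_t(k).
\]
So it suffices to show that the total confidence width at $\hyp_t$ after $k$ samples is at most the gap.

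Next I bound the width from above using the two hypotheses. Since $\beta_t^f\ge 0$, $\beta_t^{\mathrm{var}}\ge 0$ and $\alpha\ge 0$, substituting $\sigma^f_{t,k}(\hyp_t)\le \rho(\hyp_t)/\sqrt{k}$ and $\sigma^{\mathrm{var}}_{t,k}(\hyp_t)\le c_{\mathrm{var}}/\sqrt{k}$ gives
\[
\beta_t^f \sigma^f_{t,k}(\hyp_t) + \alpha \beta_t^{\mathrm{var}} \sigma^{\mathrm{var}}_{t,k}(\hyp_t) \le \frac{\beta_t^f \rho(\hyp_t) + \alpha \beta_t^{\mathrm{var}} c_{\mathrm{var}}}{\sqrt{k}}.
\]
Then I invert \eqref{eq:k_suff}. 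Because $m_t(k)>0$ and both sides of \eqref{eq:k_suff} are nonnegative, taking square roots yields $\sqrt{k}\, m_t(k) \ge \beta_t^f \rho(\hyp_t) + \alpha \beta_t^{\mathrm{var}} c_{\mathrm{var}}$. Dividing by $\sqrt{k}>0$ shows that the right-hand side of the display above is at most $m_t(k)$. Chaining the inequalities proves the claim.

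There is no real obstacle. The only care points are the following:
\begin{itemize}
\item The square-root step needs the positivity assumption $m_t(k)>0$. Without it, the sign of $m_t(k)$ would break the inversion.
\item The nonnegativity of $\beta_t^f$, $\beta_t^{\mathrm{var}}$ and $\alpha$ is used when substituting the width bounds.
\item Both $m_t(k)$ and the posterior widths refer to the same $k$, so \eqref{eq:k_suff} is an implicit condition rather than an explicit sample count.
\end{itemize}
I will note that the lemma is only a sufficient certificate; turning it into a finite stopping time would additionally require $m_t(k)$ to stay bounded away from zero as $k$ grows.
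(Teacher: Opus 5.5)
Your proposal is correct and matches the paper's proof: reduce $\mathrm{UCB}_{t,k}^{\mathrm{MV}}(\hyp_t)\le B_t$ to the total confidence width being at most $m_t(k)$, substitute the two assumed $1/\sqrt{k}$ width bounds, and invert \eqref{eq:k_suff} using $m_t(k)>0$. Your remarks on nonnegativity of $\alpha,\beta_t^f,\beta_t^{\mathrm{var}}$ and on \eqref{eq:k_suff} being implicit in $k$ only spell out what the paper leaves tacit.
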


\begin{proof}
By definition of $\mathrm{UCB}_{t,k}^{\mathrm{MV}}(\hyp_t)$, it suffices that
\[
\beta_t^f \sigma^f_{t,k}(\hyp_t)
+ \alpha \beta_t^{\mathrm{var}} \sigma^{\mathrm{var}}_{t,k}(\hyp_t)
\le m_t(k).
\]
Using the assumed bounds on $\sigma^f_{t,k}(\hyp_t)$ and $\sigma^{\mathrm{var}}_{t,k}(\hyp_t)$, this is implied by
\[
\frac{\beta_t^f \rho(\hyp_t) + \alpha \beta_t^{\mathrm{var}} c_{\mathrm{var}}}{\sqrt{k}}
\le m_t(k),
\]
which is equivalent to \eqref{eq:k_suff}.
\end{proof}

\noindent\textbf{Takeaway.}
If the current posterior mean $\mu^{\mathrm{MV}}_{t,k}(\hyp_t)$ lies below the incumbent lower bound $B_t$ by a margin $m_t(k)>0$, then the stopping rule is triggered once the confidence radius becomes smaller than this margin. In particular, the sufficient sample size in \eqref{eq:k_suff} scales as $1/m_t(k)^2$: candidates that are closer to the incumbent require substantially more samples to certify suboptimality. The threshold also increases with the uncertainty multipliers $\beta_t^f$ and $\beta_t^{\mathrm{var}}$, with the local noise level $\rho^2(\hyp_t)$, and with the variance-model constant $c_{\mathrm{var}}$, formalizing that ERAHBO allocates more samples only when uncertainty could plausibly change the mean-variance decision.

\newpage

\section{Additional empirical results}
\label{app:full_results}
This section contains additional empirical results for ERAHBO and the full experiment details for the results presented in Section~\ref{sec:experiment}.

\subsection{Algorithm-environment combination}
\label{app-sec:algo-env}
Table~\ref{app-tab:algo_env_domain_summary} shows the RL algorithms and environments used in the experiment, and each pair of them corresponds to a dataset for risk-aware BO tasks. By `task', we refer to a dataset for the algorithm-environment pair. The dataset is generated using ARLbench~\citep{becktepe2024arlbench}, including all available RL tasks at the time of publication. Each dataset contains 512 configurations generated by a Sobol sequence, and each configuration includes policy returns across 50 random seeds. 

Some datasets contain configurations that return \texttt{nan}, a common issue in RL benchmarks. In our experiments, we filter out these~\texttt{nan} configurations, since failure handling is beyond the scope of this work and is left for future work. In detail, there are 18 infeasible configurations from \texttt{ppo\_cc\_pendulum} dataset, 48 from~\texttt{ppo\_cc\_continuous\_mountain\_car}, and 182 from~\texttt{ppo\_brax\_halfcheetah}.
In addition, we exclude the dataset~\texttt{dqn\_xland\_doorkey} because for this task, all configurations sampled by the Sobol sequence yield zero performance. Both datasets, before and after feasibility filtering, are available online at~\href{https://github.com/LUH-AI/Efficient-Risk-Averse-BO}{\texttt{https://github.com/LUH-AI/Efficient-Risk-Averse-BO}}.

\begin{table}[ht]
\centering
\caption{RL algorithms and environments from ARLBench used in the experiment by domain}
\label{app-tab:algo_env_domain_summary}
\footnotesize
\begin{tabular}{@{} l @{\hspace{2em}} p{4cm} @{\hspace{2em}} p{3cm} @{\hspace{2em}} p{3.5cm} @{}}
\toprule
\textbf{Algorithm} & \textbf{Classic Control (CC)} & \textbf{Brax} & \textbf{XLand} \\
\midrule
\textbf{DQN} & \texttt{acrobot}, \texttt{cartpole}, \texttt{mountain\_car} & --- & \texttt{empty\_random},\texttt{four\_rooms} \\
\addlinespace
\textbf{PPO} & \texttt{acrobot}, \texttt{cartpole}, \texttt{continuous\_mountain\_car}, \texttt{mountain\_car}, \texttt{pendulum} & \texttt{fast}, \texttt{halfcheetah} & \texttt{door\_key}, \texttt{empty\_random}, \texttt{four\_rooms} \\
\addlinespace
\textbf{SAC} & \texttt{continuous\_mountain\_car}, \texttt{pendulum} & \texttt{fast}, \texttt{halfcheetah} & --- \\
\bottomrule
\end{tabular}
\end{table}

\subsection{Ranking statistics}
\label{app-sec:rank_regret}
Tables~\ref{tab:algo_rank_mean} and \ref{tab:algo_rank_median} show the aggregated mean and median rank for final simple and cumulative regret of the mean-variance objective.
We see similar trends in performance over time: RAHBO and ERAHBO outperform GP-UCB, with RAHBO with $k=2$ ranking similarly to ERAHBO but with a larger inter-quantile range.

\begin{table}[htbp]
  \centering
  \caption{Median algorithm rank $[\text{25th percentile}, \text{75th percentile}]$ aggregated across experiments; lower is better.}
  \begin{adjustbox}{max width=\linewidth}
  \begin{tabular}{lcccc}
    \toprule
    Metric & ERAHBO & RAHBO ($k=20$) & RAHBO ($k=2$) & GP-UCB \\
    \midrule
Simple Regret MV & 2.00 [1.00, 2.00] & 3.00 [1.50, 3.00] & 2.00 [1.50, 3.00] & 4.00 [3.00, 4.00] \\
    Cumulative Regret MV & 2.00 [1.00, 2.00] & 3.00 [2.00, 3.00] & 2.00 [1.00, 4.00] & 4.00 [3.00, 4.00] \\
    \bottomrule
  \end{tabular}
  \end{adjustbox}
  \label{tab:algo_rank_median}
\end{table}

Figures~\ref{app-fig:perf_over_time_mean}
shows the mean ranks over time per environment, aggregated across RL algorithms. As in the main paper, we see clear differences across domains, especially in XLand, but ERAHBO ranks best overall. 

Figure~\ref{app-fig:mode-rank} shows the rank distribution. In this experiment, the rank distributions across algorithms are similar; however, the fraction colormap reveals that the RAHBO~$k=2$~rank distribution is bimodal, with a concentration at the best and worst ranks. 

In addition to the rank result with RAHBO repetitions $k = 2$ and $k=20$ shown in Figure~\ref{fig:rank_over_time}, we run RAHBO with $k\in\{5, 8, 11, 14, 17\}$ and present the domain-aggregated rank over time in~Figures~\ref{app-fig:diff-k}.  After around 750 evaluations, ERAHBO outperforms all the RAHBO repetition variants.

\begin{figure}
    \centering
    \includegraphics[width=\linewidth]{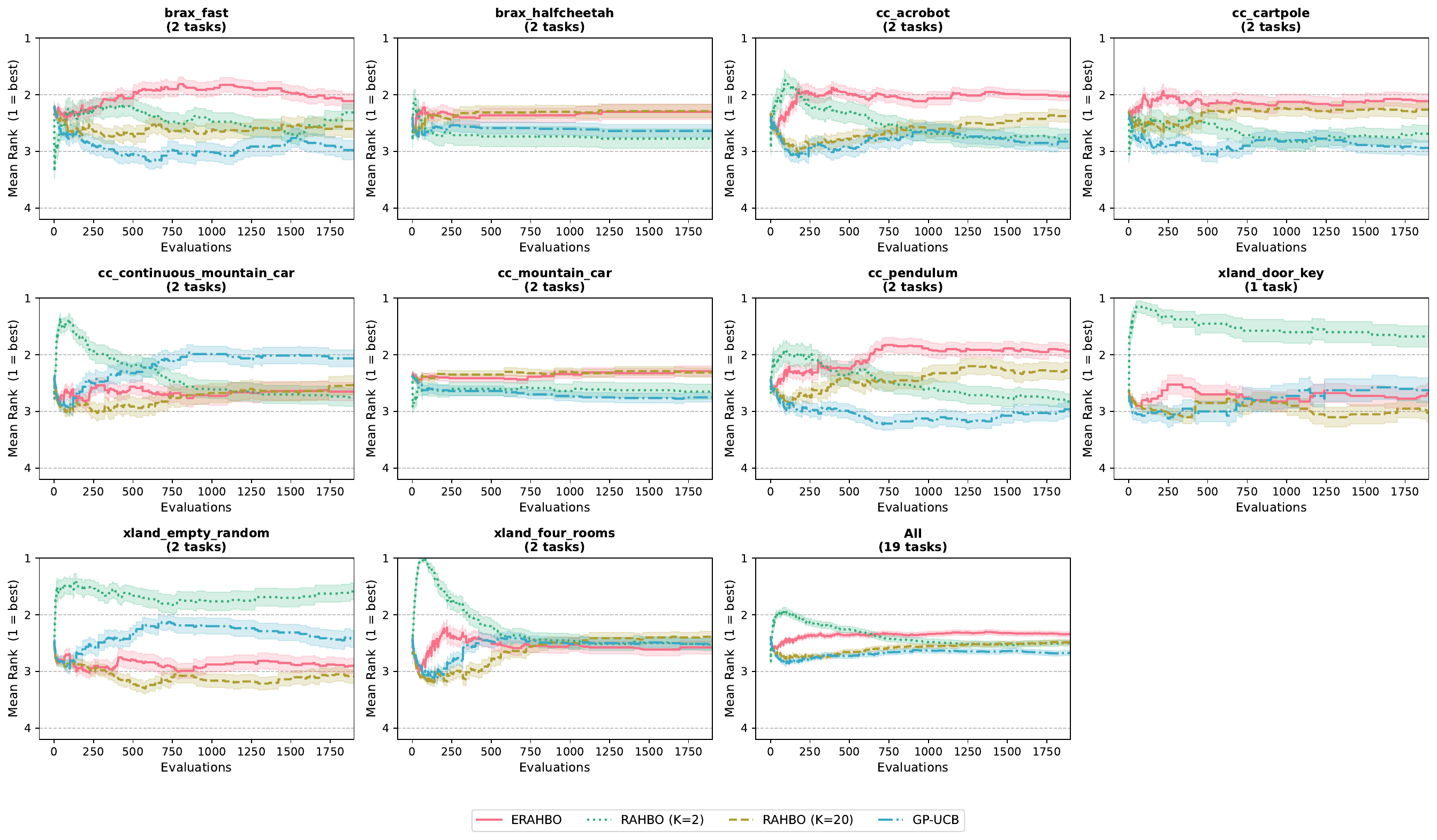}
    \caption{Mean algorithm rank aggregated by RL environments. The line and shaded area show the mean and $1\times$ standard error, respectively.
    }
    \label{app-fig:perf_over_time_mean}
\end{figure}

\begin{figure}
    \centering
    \includegraphics[width=\linewidth]{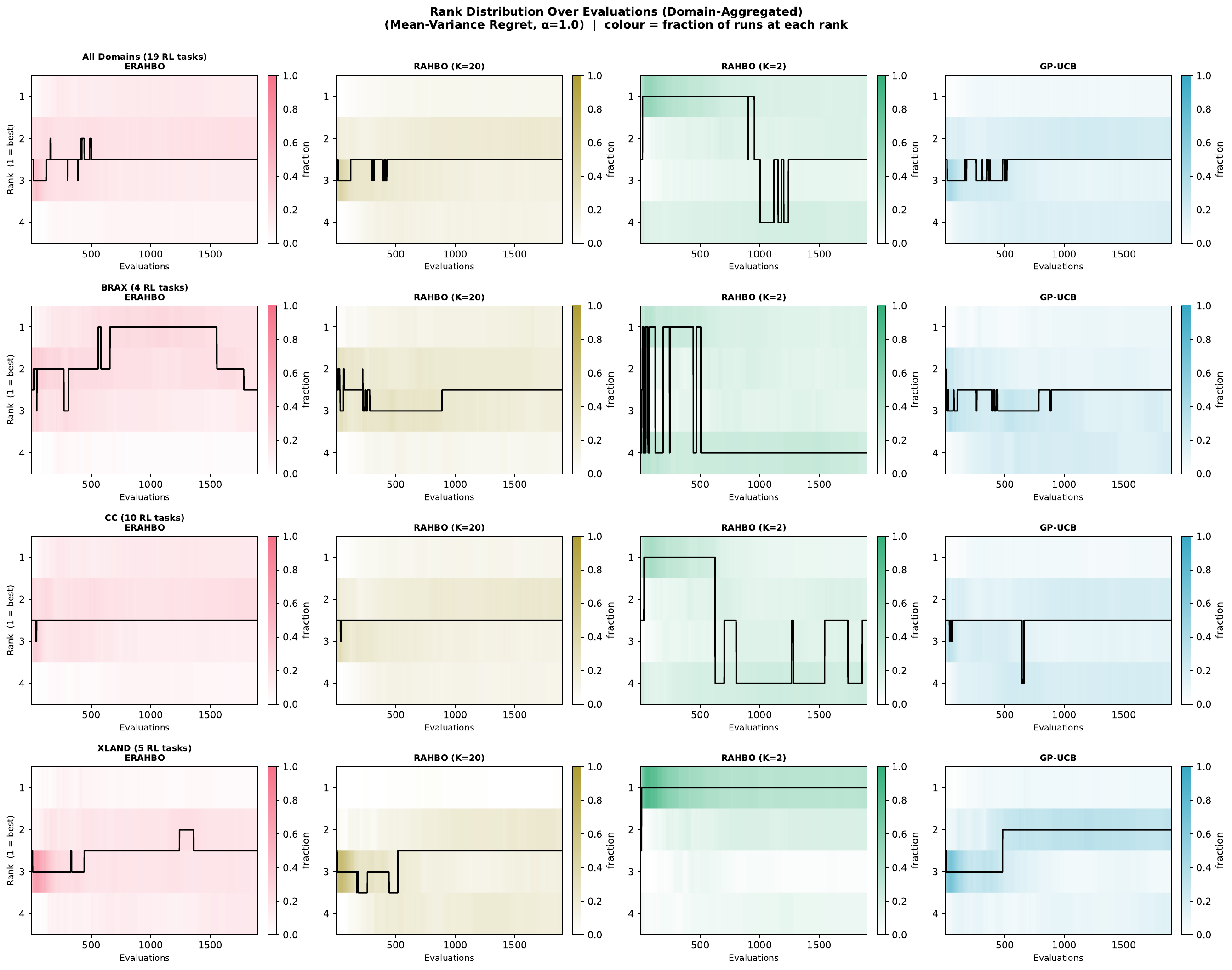}
    \caption{Rank distribution over time. Rows and columns correspond to the aggregation domain and method, respectively, as compared in the experiment (e.g., Figure~\ref{fig:rank_over_time}). The solid line and the colormap show the rank mode and the fraction of rank appearances. 
}
    \label{app-fig:mode-rank}
\end{figure}

\begin{figure}
    \centering
    \includegraphics[width=\linewidth]{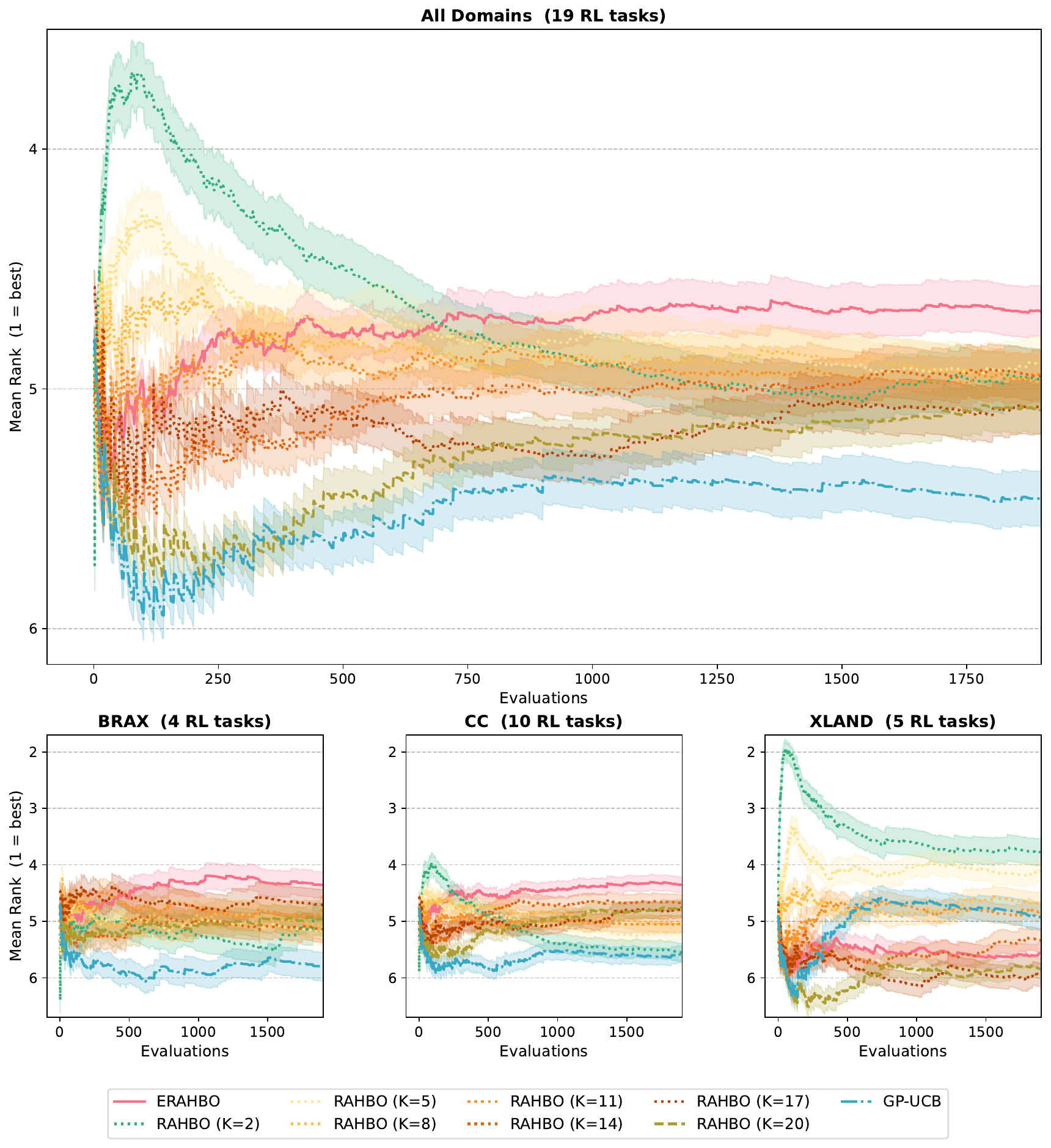}
    \caption{Mean algorithm rank aggregated by domain with RAHBO $k \in \{2, 5, 8, 11, 14, 17, 20\}$. The line and shaded area show the mean and $1\times$ standard error, respectively.
    }
    \label{app-fig:diff-k}
\end{figure}

\newpage
\subsection{Aggregated regret over time}
\label{app-sec:regret_over_time}
Figure~\ref{app-fig:checkpoint_normalized_mean_regret}-\ref{app-fig:checkpoint_xland} show the normalized mean-variance regret at different evaluation budgets $t$ for domains CC, Brax, and XLand. Figures~\ref{app-fig:regret_over_time_mean}
shows the mean
regret over time per environment and algorithm. 
Overall, the tasks are quite different: for some, we see large improvements over the initial design (e.g., on acrobot), whereas in others, not much can be gained (e.g., on mountain car). 
While the curves look slightly different for different algorithms, the overall trend seems to be environment-dependent.

\begin{figure}
    \centering
    \includegraphics[width=\linewidth]{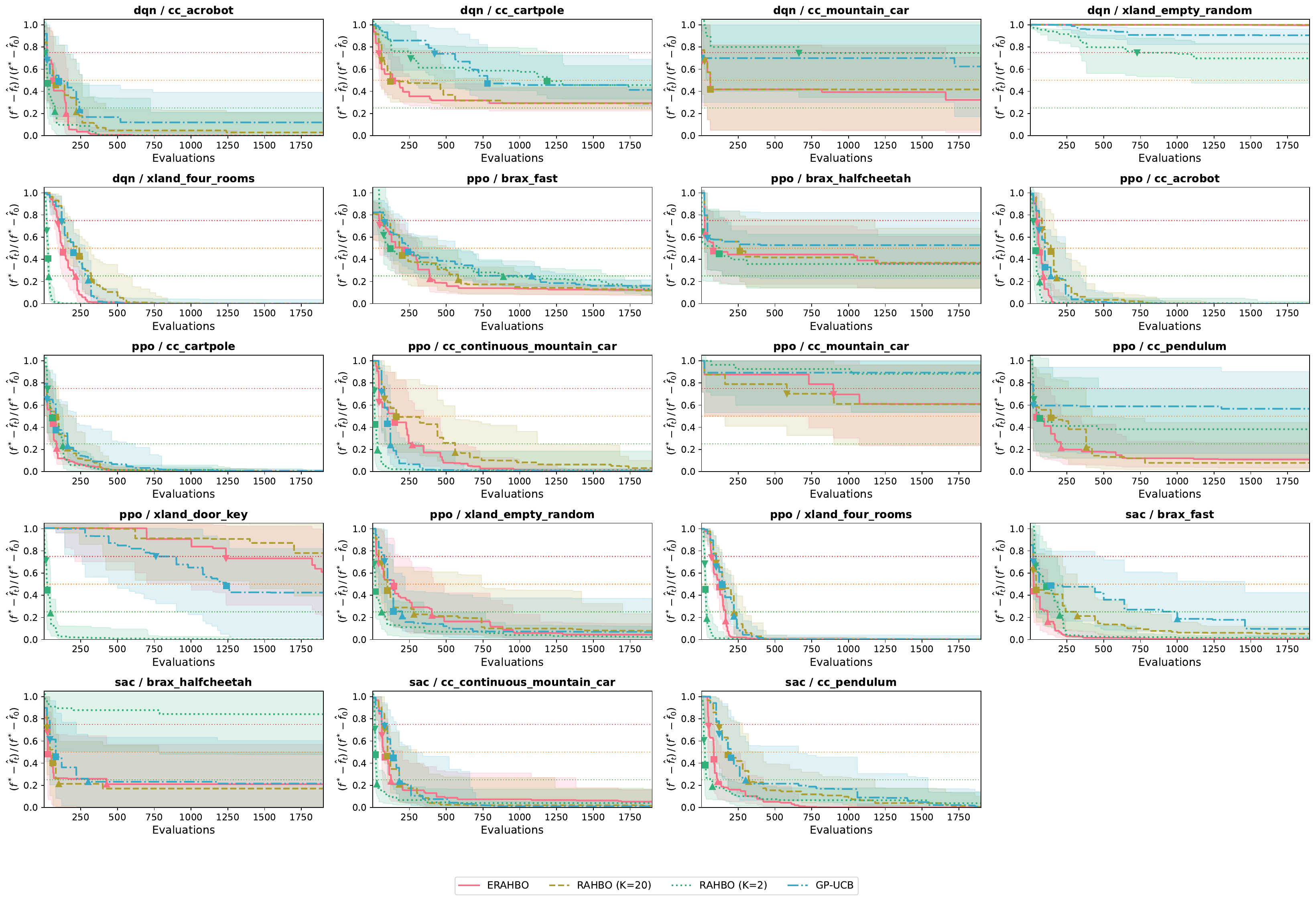}
    \caption{Algorithm normalized regret for individual RL task. The line and shaded area show the IQM and $95\%$ bootstrapped CI, respectively. The colored marker indicates the point at which the algorithm's normalized regret exceeds the threshold (\% of initial-design regret).}
    \label{app-fig:regret_over_time_mean}
\end{figure}

\subsection{Sample efficiency}
\label{app-sec:threshold_efficiency}

\begin{figure}
    \centering
    \includegraphics[width=\linewidth]{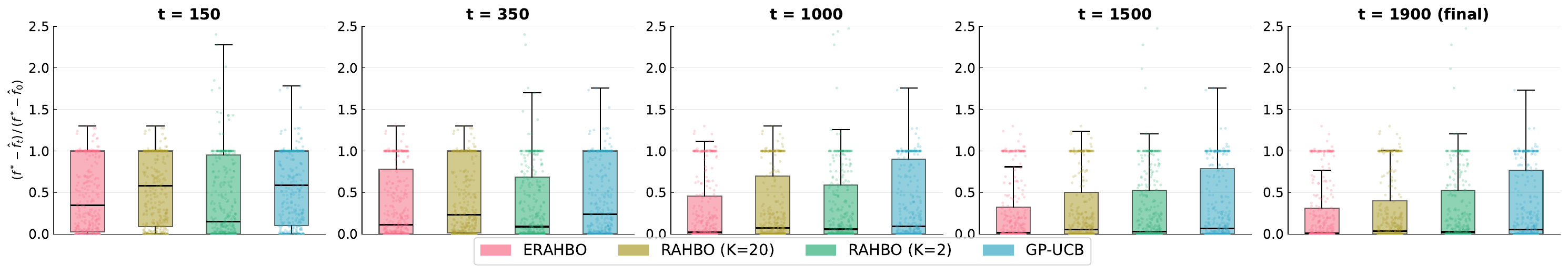}
    \caption{Normalized mean-variance regret at different evaluation budget $t$. Mean regret is normalized by the initial design's mean regret.}
    \label{app-fig:checkpoint_normalized_mean_regret}
\end{figure}

\begin{figure}
    \centering
    \includegraphics[width=\linewidth]{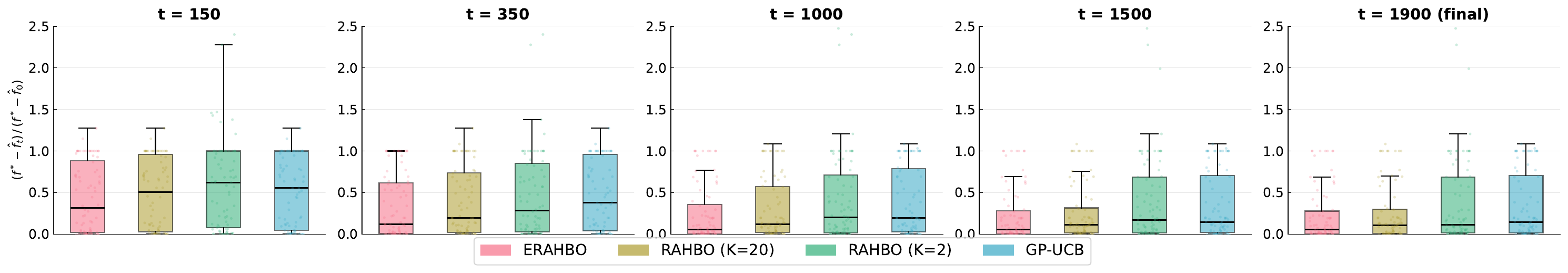}
    \caption{Normalized mean-variance regret at different evaluation budget $t$ in domain Brax. Mean regret is normalized by the initial design's mean regret.}
    \label{app-fig:checkpoint_brax}
\end{figure}

\begin{figure}
    \centering
    \includegraphics[width=\linewidth]{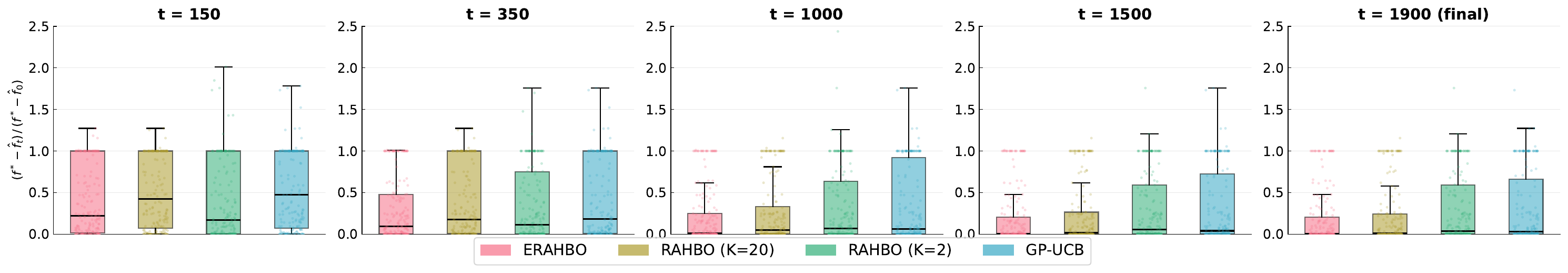}
    \caption{Normalized mean-variance regret at different evaluation budget $t$ in domain CC. Mean regret is normalized by the initial design's mean regret.}
    \label{app-fig:checkpoint_cc}
\end{figure}

\begin{figure}
    \centering
    \includegraphics[width=\linewidth]{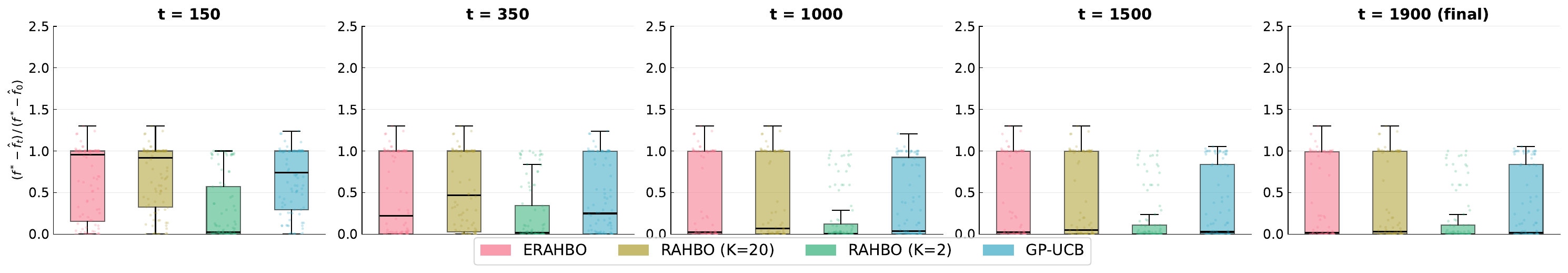}    \caption{Normalized mean-variance regret at different evaluation budget $t$ in domain Xland. Mean regret is normalized by the initial design's mean regret.}
    \label{app-fig:checkpoint_xland}
\end{figure}

Tables~\ref{tab:threshold_crossing_mean} %and \ref{tab:threshold_crossing_median}
shows how quickly ERAHBO, RAHBO and GP-UCB improve over the initial design. We clearly see that ERAHBO can be more efficient than RAHBO in most cases due to discarding poor configurations early on.
\begin{table}[htbp]
\centering
  \caption{First evaluation steps where the IQM of normalized MV regret drops below threshold (\% of initial-design regret). "---" means not reached.}
  \begin{adjustbox}{max width=\linewidth}
  \begin{tabular}{lrrrrrrrrrrrr}
    \toprule
    ~ & \multicolumn{3}{c}{ERAHBO} & \multicolumn{3}{c}{RAHBO (K=20)} & \multicolumn{3}{c}{RAHBO (K=2)} & \multicolumn{3}{c}{GP-UCB} \\
    \cmidrule(lr){2-4} \cmidrule(lr){5-7} \cmidrule(lr){8-10} \cmidrule(lr){11-13}
    Combo & $75\%$ & $50\%$ & $25\%$ & $75\%$ & $50\%$ & $25\%$ & $75\%$ & $50\%$ & $25\%$ & $75\%$ & $50\%$ & $25\%$ \\
    \midrule
    dqn/cc\_acrobot & $21$ & $65$ & $151$ & $21$ & $81$ & $221$ & $7$ & $25$ & $75$ & $21$ & $101$ & $241$ \\
    dqn/cc\_cartpole & $43$ & $135$ & --- & $61$ & $121$ & --- & $259$ & $1185$ & --- & $421$ & $781$ & --- \\
    dqn/cc\_mountain\_car & $4$ & $61$ & --- & $21$ & $61$ & --- & $663$ & --- & --- & $1$ & --- & --- \\
    dqn/xland\_empty\_random & --- & --- & --- & --- & --- & --- & $727$ & --- & --- & --- & --- & --- \\
    dqn/xland\_four\_rooms & $97$ & $130$ & $217$ & $121$ & $241$ & $321$ & $19$ & $29$ & $33$ & $121$ & $201$ & $301$ \\
    ppo/brax\_fast & $48$ & $223$ & $389$ & $81$ & $201$ & $581$ & $75$ & $121$ & $889$ & $81$ & $241$ & $1081$ \\
    ppo/brax\_halfcheetah & $23$ & $81$ & --- & $41$ & $261$ & --- & $3$ & $121$ & --- & $41$ & --- & --- \\
    ppo/cc\_acrobot & $48$ & $63$ & $92$ & $81$ & $141$ & $181$ & $21$ & $37$ & $65$ & $61$ & $101$ & $141$ \\
    ppo/cc\_cartpole & $21$ & $63$ & $87$ & $21$ & $81$ & $161$ & $25$ & $57$ & $129$ & $21$ & $81$ & $161$ \\
    ppo/cc\_continuous\_mountain\_car & $43$ & $151$ & $270$ & $81$ & $161$ & $561$ & $11$ & $15$ & $33$ & $61$ & $101$ & $121$ \\
    ppo/cc\_mountain\_car & $898$ & --- & --- & $581$ & --- & --- & --- & --- & --- & --- & --- & --- \\
    ppo/cc\_pendulum & $21$ & $43$ & $210$ & $21$ & $141$ & $381$ & $25$ & $65$ & --- & $21$ & --- & --- \\
    ppo/xland\_door\_key & $1238$ & --- & --- & --- & --- & --- & $13$ & $23$ & $45$ & $761$ & $1241$ & --- \\
    ppo/xland\_empty\_random & $41$ & $145$ & $403$ & $61$ & $101$ & $281$ & $9$ & $19$ & $61$ & $81$ & $141$ & $201$ \\
    ppo/xland\_four\_rooms & $69$ & $123$ & $165$ & $101$ & $141$ & $221$ & $23$ & $27$ & $37$ & $101$ & $141$ & $221$ \\
    sac/brax\_fast & $21$ & $25$ & $120$ & $21$ & $41$ & $321$ & $37$ & $111$ & $201$ & $21$ & $141$ & $1001$ \\
    sac/brax\_halfcheetah & $23$ & $27$ & $427$ & $21$ & $61$ & $101$ & --- & --- & --- & $41$ & $81$ & $301$ \\
    sac/cc\_continuous\_mountain\_car & $63$ & $83$ & $125$ & $81$ & $101$ & $181$ & $15$ & $19$ & $29$ & $81$ & $141$ & $181$ \\
    sac/cc\_pendulum & $49$ & $85$ & $111$ & $121$ & $181$ & $321$ & $17$ & $25$ & $75$ & $121$ & $201$ & $301$ \\  
    \bottomrule
  \end{tabular}
  \end{adjustbox}
  \label{tab:threshold_crossing_mean}
\end{table}

\newpage

\subsection{Schedule $\beta_{\text{stop}}$}

It can be observed, e.g., in Figure~\ref{fig:rank_over_time}, that RAHBO (k=2) ranks best for small budgets, indicating that exploring with minimal repetition can be efficient at this stage. Inspired by this, we propose to increase the LCB of the incumbent. Recall \eqref{eq:inc_lcb} as 
\begin{equation}
B_t := \max_{\hyp \in \mathcal{D}_{t-1}} \mathrm{LCB}_t^{\mathrm{MV}}(\hyp).
\end{equation}
We set the confidence parameter $\beta$ in the $\mathrm{LCB}_t^{\mathrm{MV}}(\hyp)$ as $\beta_{\text{stop}, t}$ and increase it with an exponential schedule
\begin{equation}
    \label{app-eqn:exp_schedule}
    \beta_{\text{stop}, t} = \frac{e^{\gamma \tau} - 1}{e^{\gamma} - 1}
\end{equation}
or a sigmoid schedule
\begin{equation}
    \label{app-eqn:sig_schedule}
    \beta_{\text{stop}, t} = \frac{\sigma_{s}(\tau) - \sigma_{s}(0)}{\sigma_{s}(1) - \sigma_{s}(0)}
\end{equation} 
where $\tau = t/T$ is the normalized progress, with $t$ being the current evaluation and $T$ being the total evaluation budget, $\sigma_s(\tau) = 1/(1 + e^{s (\tau-0.5)})$ is a sigmoid function, $\gamma$ and $s$ are hyperparameter of the schedule. We choose different schedule hyperparameters and illustrate them in Figure~\ref{app-fig:schedule}. Moreover, we apply the same $\beta_{\text{stop}, t}$ to the UCB in~\eqref{eq:adaptive_rule}. We note that it is independent of the confidence parameter $\beta$ in the acquisition function. 

Figure~\ref{app-fig:rank_schedule} shows the domain-aggregated mean rank over time for ERAHBO with different schedules. We see that most schedules can outperform RAHBO $k=2$ after 200-500 evaluations, while some consistently outperform ERAHBO throughout the budget. Notably, on XLand, an exponential schedule with $\gamma=50$ can drastically improve ERAHOBO, as it not only outperforms a constant schedule but also ranks comparably close to RAHBO $k=2$. 
Even though the schedule introduces one additional hyperparameter, we see the performance is at least comparable with ERAHBO with a constant $\beta_{\text{stop}}$ as long as the initial $\beta_{\text{stop}, t}$ value of the schedule is small enough.

A small initial $\beta_{\text{stop}}$ benefits ERAHBO from a more optimistic triggering of the confidence-based resampling rule~\eqref{eq:adaptive_rule} early on; therefore, ERAHBO with such a schedule will start with investing fewer repetitions for rejection, however, still resamples thoroughly if the configuration is promising enough. As the $\beta_{\text{stop}}$ increases with the schedule, promising configurations are exploited with more repetitions. 

\label{app-sec:schedule_ERAHBO_beta}
\begin{figure}[t]
    \centering
    \includegraphics[width=\linewidth]{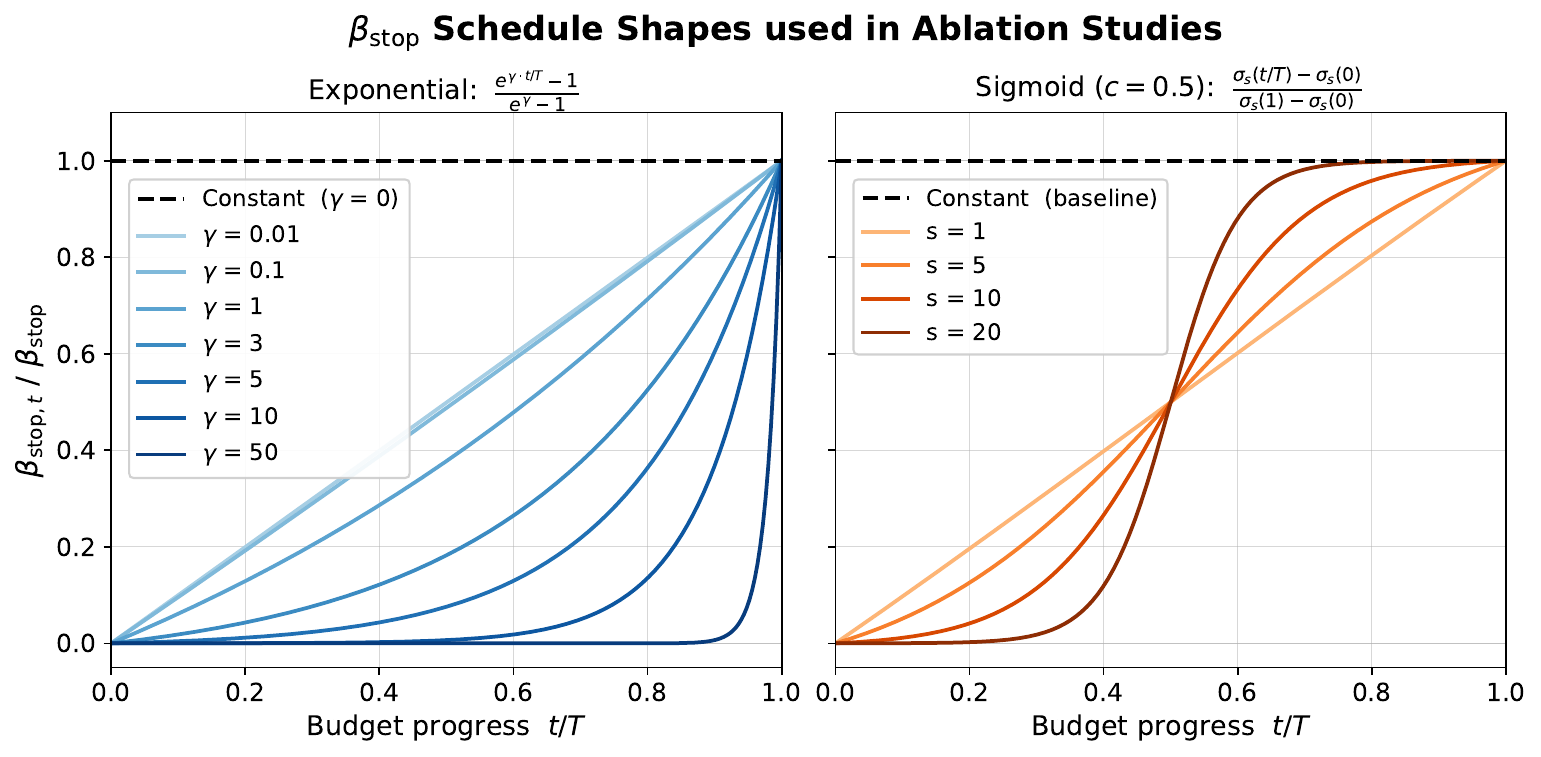}
    \caption{The $\beta_\text{stop}$ schedule used in the ablation study. The horizontal axis is the evaluation progress $t$ normalized by the total evaluation budget $T$, and the vertical axis is the instantaneous value $\beta_{\text{stop}, t}$, normalized by the maximum $\beta_{\text{stop}}$, at normalized progress $t/T$.}
    \label{app-fig:schedule}
\end{figure}

\begin{figure}[t]
    \centering
    \includegraphics[width=\linewidth]{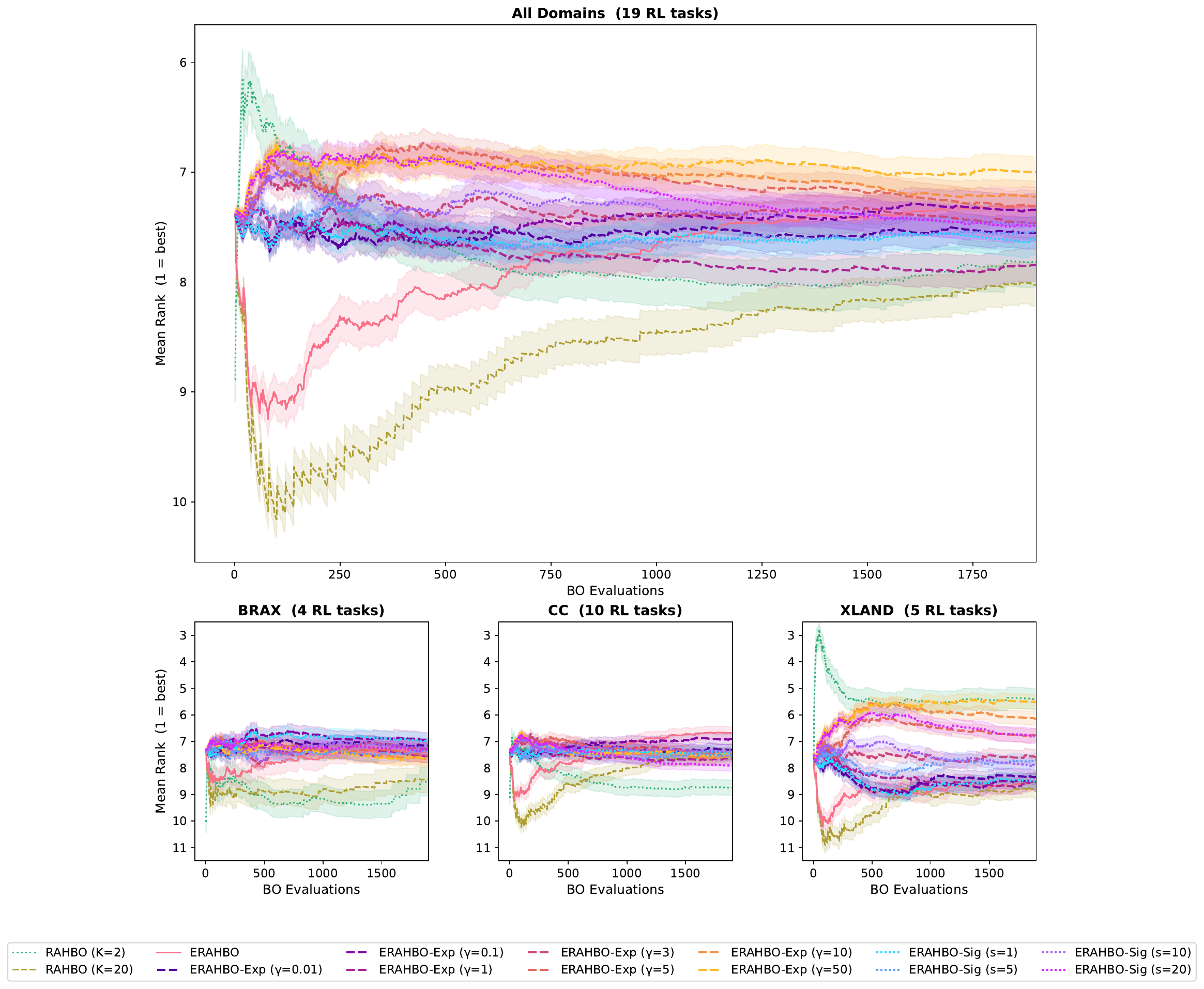}
    \caption{
    Domain aggregated mean rank over evaluation. The line and shaded area show the mean and $1\times$ standard error, respectively. Schedules with steep rises can outperform constant schedules. 
    }
    \label{app-fig:rank_schedule}
\end{figure}

\end{document}